\documentclass{article}

\usepackage{arxiv}

\usepackage[utf8]{inputenc} 
\usepackage[T1]{fontenc}    
\usepackage{hyperref}       
\usepackage{url}            
\usepackage{booktabs}       
\usepackage{amsfonts}       
\usepackage{nicefrac}       
\usepackage{microtype}      
\usepackage{lipsum}         
\usepackage{graphicx}
\usepackage{natbib}
\usepackage{doi}
\usepackage{multirow}
\usepackage{longtable}

\usepackage{amsmath}
\usepackage{amsthm}
\usepackage{amssymb}
\usepackage{xcolor}
\usepackage{graphicx} 
\usepackage{float}
\usepackage{wrapfig}
\usepackage{graphicx}
\usepackage{array}
\usepackage{cleveref}

\usepackage{algorithm}
\usepackage{algorithmic}

\newtheorem{theorem}{Theorem}
\newtheorem{proposition}{Proposition}

\newcommand{\LNRE}{\gL_{\text{NRE}}}
\newcommand{\LKLIEP}{\gL_{\text{KLIEP}}}
\newcommand{\LDSM}{\gL_{\text{DSM}}}
\newcommand{\LSBI}{\gL_{\text{SBI}}}

\title{Guidance for Prior Change via Density Ratio Estimation}

\usepackage{amsmath,amsfonts,bm}

\def\eqref#1{equation~\ref{#1}}

\def\1{\bm{1}}

\def\rvtheta{{\boldsymbol{\theta}}} 

\def\rvc{{\mathbf{c}}}

\def\rvx{{\mathbf{x}}}

\def\rvz{{\mathbf{z}}}

\def\rmI{{\mathbf{I}}}

\def\evtheta{{\theta}}

\DeclareMathAlphabet{\mathsfit}{\encodingdefault}{\sfdefault}{m}{sl}
\SetMathAlphabet{\mathsfit}{bold}{\encodingdefault}{\sfdefault}{bx}{n}

\def\gD{{\mathcal{D}}}

\def\gL{{\mathcal{L}}}

\def\gN{{\mathcal{N}}}

\def\gU{{\mathcal{U}}}

\def\gX{{\mathcal{X}}}

\def\gCN{{\mathcal{CN}}} 

\newcommand{\E}{\mathbb{E}}

\newcommand{\R}{\mathbb{R}}

\author{
Yichen Zang \\
Department of Mathematics\\
University of Bristol\\
Bristol, UK \\
\texttt{yichen.zang@bristol.ac.uk}
\And
Song Liu \\
Department of Mathematics\\
University of Bristol\\
Bristol, UK \\
\texttt{song.liu@bristol.ac.uk}
\And
Jiun-Yi Lin \\
Department of Mathematics\\
University of Bristol\\
Bristol, UK \\
\texttt{am21541@bristol.ac.uk}
}
\date{}

\begin{document}

\maketitle

\begin{abstract}
Simulation-Based Inference (SBI) serves as a vital framework for parameter inference in scientific fields where simulators involve intractable likelihoods, yet while amortized generative models offer rapid posterior estimation, they are often restricted by the specific priors used during training, thereby limiting their flexibility as prior knowledge evolves. To address this prior dependency, PriorGuide was introduced as an inference-time guidance method, but due to its intractable formulation, it relies on Gaussian approximations of the reverse transition kernel and Gaussian mixture model fitting for the prior ratio, both of which introduce systematic bias. Motivated by these limitations, we propose an unbiased test-time guidance framework that leverages Density Ratio Estimation (DRE) to learn a score guidance term, effectively  decoupling the inference process from the prior training. Moreover, our framework remains agnostic to the specific density ratio estimators, making it a general and flexible framework for handling prior changes. 
Experimental results across multiple tasks demonstrate that our method matches or outperforms PriorGuide on C2ST and MMD in most tasks while maintaining robustness even under limited overlap between the training and target priors. Furthermore, we apply our method to Bayesian updating for parameter inference from planetary light-curve data, where it also demonstrates strong effectiveness and robustness. Code is available at \url{https://github.com/a-chenchen/dre-based-prior-guidance}.
\end{abstract}

\section{Introduction}
\label{sec:Introduction}
In the data-driven era, Simulation-Based Inference (SBI) is a fundamental framework for parameter inference in complex systems, such as astrophysics \citep{ho_2024_ltuili}, particle physics \citep{brehmer_2021_simulationbased}, and epidemiology \citep{li_2025_advances}. Here, while simulators easily generate synthetic data $\rvx$ for parameters $\rvtheta$, the likelihood $p(\rvx \mid \rvtheta)$ is analytically intractable, rendering traditional Bayesian methods like MCMC inapplicable.

While Approximate Bayesian Computation (ABC) has long been the standard likelihood-free approach, its reliance on manually crafted summary statistics often leads to significant information loss, and its computational efficiency scales poorly with the dimensionality of the parameter space. To overcome these limitations, recent advancements in deep generative modeling have transformed the landscape of SBI. Modern approaches, such as Normalizing Flows \citep{papamakarios_2021_normalizing}, Diffusion Models \citep{song_2021_scorebased}, and Flow Matching \citep{liu_2022_flow}, enable the development of amortized inference methods \citep{sharrock_2022_sequential, dax_2023_flow}. By training on paired datasets $\{(\rvtheta^{(i)}, \rvx^{(i)})\}_{i=1}^N$ sampled from the joint distribution $p(\rvtheta, \rvx)$, these models can provide high-quality posterior estimates for new observations without additional simulations, enabling efficient real-time data analysis.

Despite its efficiency, a major limitation of this amortized paradigm is its rigid dependency on the training prior $p(\rvtheta)$. In standard Neural Posterior Estimation (NPE), the prior distribution is implicitly embedded in the model's weights during training. Therefore, even though these models generalize well across different observations $\rvx$, they remain tied to their training prior. This inherent rigidity hinders iterative scientific inquiry, where prior knowledge is frequently refined in light of new data or theoretical insights. Such requirements for dynamic prior adaptation are prevalent across diverse disciplines, including nuclear physics \citep{jiang_2022_bayesian} and engineering \citep{liu_2023_efficient}.

\begin{wrapfigure}{r}{0.5\textwidth}
    \centering
    \includegraphics[width=\linewidth]{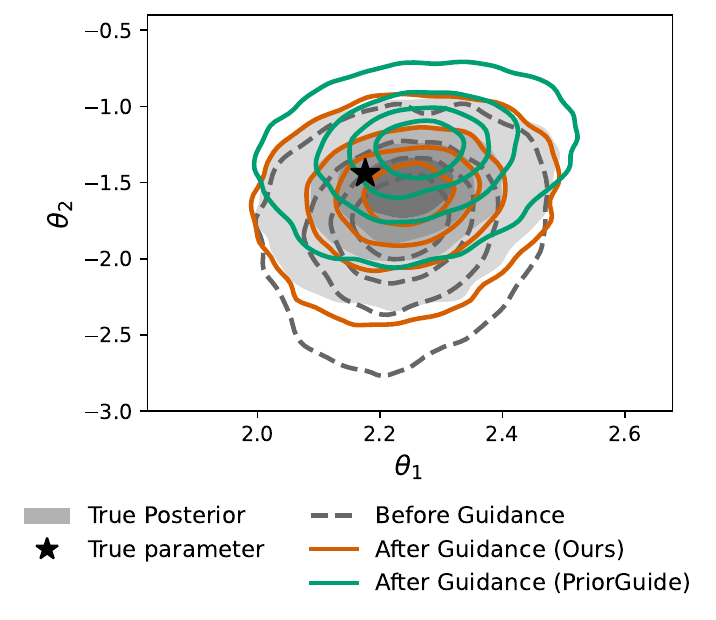}
    \caption{Two-dimensional Posterior approximations under prior shift for Ornstein-Uhlenbeck Process. Grey shading denotes the true posterior; dashed contour the base model; colored contours our method and PriorGuide after guidance. Our method closely tracks the truth while PriorGuide exhibits deviation.}
    \label{fig:bias}
\end{wrapfigure}

When the prior changes to $q(\rvtheta)$, the pre-trained amortized SBI model becomes biased, as it continues to reflect the now-obsolete training prior $p(\rvtheta)$. The classical solution is to re-simulate the entire dataset from $q(\rvtheta)p(\rvx\mid \rvtheta)$ and retrain an SBI model. However, especially for complex simulators, this process has a very high computational cost, making it prohibitively expensive. Recent research has begun to explore "prior amortization" strategies to handle varying priors \citep{chang_2024_amortized, whittle_2025_distribution}. These methods typically require pre-defining a specific family of priors or discretizing the prior space during training, which fails to accommodate truly arbitrary prior changes encountered at test-time.

A more efficient way to handle prior changes is test-time guidance, such as the recently proposed PriorGuide \citep{yang_2025_priorguide}. Given a diffusion-based SBI model pre-trained on $p(\rvtheta)$ and prior ratio between the old and new prior $\frac{q(\rvtheta)}{p(\rvtheta)}$, it introduces a guidance term to the base diffusion model during testing, to adjust the generated samples to a new posterior. However, PriorGuide relies on two approximations due to its intractable formulation: a GMM-based fit for the prior ratio and a Gaussian approximation for the reverse transition kernel. These approximations may introduce significant bias into the estimation of the true guidance, as illustrated in Figure~\ref{fig:bias}.

In this paper, we propose a ratio-estimation-based test-time guidance method. Our approach utilizes a lightweight Density Ratio Estimation (DRE) \citep{durkan2020contrastive, hermans2020likelihood}, trained across diffusion steps, to directly learn an unbiased and robust guidance term. Our method offers the following three advantages:

\begin{itemize}
\item \textbf{Flexible and Unbiased Inference:}
Our method estimates the guidance term without the systematic discrepancy incurred by PriorGuide's Gaussian kernel approximation, 
ensuring unbiased adaptation to arbitrary prior distributions. It is agnostic to ratio estimation methods and provides amortized inference across different observations.
\item \textbf{Computational Efficiency:}
Our method enables direct reuse of existing datasets without additional simulator calls, streamlining the training process. 
The guidance term is trained using a lightweight density ratio estimator, 
enabling efficient test-time inference without further correction, which is required by PriorGuide.

\item \textbf{Robustness:} Adding Gaussian noise to samples has been proposed to ensure that distributions are well-supported, making the density ratio well-defined \citep{zhang_2020_spread, roth_2017_stabilizing}. Since our model is trained on noisy diffusion states, this mechanism of diffusion naturally stabilizes the ratio estimation process and improves robustness when the target prior is insufficiently supported by the training prior. In contrast, incorporating such noise-based stabilization into PriorGuide is non-trivial, since its guidance relies only on density ratio over clean samples.

\end{itemize}

\section{Background}
\label{sec:Background}
\subsection{Simulation-based Inference}

The primary objective in many scientific disciplines is to infer model parameters $\rvtheta$ from observed data $\rvx$. In Bayesian framework, the posterior distribution is defined as $p(\rvtheta\mid \rvx) \propto p(\rvx\mid \rvtheta)p(\rvtheta)$, where $p(\rvtheta)$ represents the prior and $p(\rvx\mid \rvtheta)$ is the likelihood. Since the posterior intrinsically depends on the choice of prior, the prior's specification plays a critical role in the inference result.

In many scientific applications, the likelihood $p(\rvx\mid \rvtheta)$ is analytically intractable or computationally prohibitive to evaluate, even though samples can be drawn from it via stochastic simulators, making it difficult to directly solve for $p(\rvtheta\mid \rvx)$. To address this, Simulation-Based Inference (SBI) \citep{deistler2025simulation, cranmer2020frontier} leverages parameter-observation pairs $(\rvtheta, \rvx) \sim p(\rvx\mid \rvtheta)p(\rvtheta)$ generated from the simulator to learn the underlying statistical relationships, typically with machine learning methods.

\subsection{Diffusion Models}
\label{sec:diffusion}

Diffusion models map a complex data distribution $p(\rvz_0)$ onto a tractable Gaussian distribution via a forward perturbation process and recover samples by reversing it \citep{ho2020ddpm, song_2021_scorebased}. Following the Variance Preserving (VP) formulation, the diffused state $\rvz_t$ at time $t \in [0, 1]$ is governed by:
\begin{equation}
p(\rvz_t \mid  \rvz_0) = \gN(\rvz_t; \sqrt{1-\sigma^2(t)}\rvz_0, \sigma^2(t)\rmI),
\label{eq:vp-diff}
\end{equation}
where $\sigma(t)$ denotes a noise schedule increasing from $\sigma(0) = 0$ to a terminal value $\sigma(1) \approx 1$. New samples are synthesized by reversing this trajectory through a reverse-time Stochastic Differential Equation (SDE):
\begin{equation}
d\rvz_t = \left[ -\frac{1}{2}\beta(t)\rvz_t - \beta(t)\nabla_{\rvz_t} \log p(\rvz_t) \right] dt + \sqrt{\beta(t)} d\bar{\boldsymbol{\omega}}_t,
\end{equation}

where $\bar{\boldsymbol{\omega}}_t$ is a standard reverse-time Wiener process and $\beta(t) = -\frac{d}{dt}\log(1-\sigma^2(t))$ is the drift coefficient associated with the noise schedule $\sigma(t)$.
The intractable score function $\nabla_{\rvz_t} \log p(\rvz_t)$ is approximated by a neural network $\mathbf{s}_\phi(\rvz_t, t)$ trained via the denoising score matching objective:
\begin{equation}
\LDSM = \E_{t,\, p(\rvz_0),\, p(\rvz_t \mid \rvz_0)} \left[ \lambda(t) \left\| \mathbf{s}_\phi(\rvz_t, t) - \nabla_{\rvz_t} \log p(\rvz_t\mid \rvz_0) \right\|^2 \right],
\end{equation}
where the conditional score is analytically given by $\nabla_{\rvz_t} \log p(\rvz_t\mid \rvz_0) = \frac{-(\rvz_t - \sqrt{1-\sigma^2(t)}\rvz_0)}{\sigma^2(t)}$, and $\lambda(t)$ is a weighting function.

\subsection{Diffusion-based SBI}
\label{sec:diffusion_sbi}

Diffusion-based SBI \citep{sharrock_2022_sequential} casts posterior estimation as a conditional generative task: a score network, conditioned on the observed
data $\rvx$, learns the score of the diffused posterior, which drives a reverse
diffusion transporting Gaussian noise to samples from $p(\rvtheta \mid \rvx)$.

The diffusion process from Section~\ref{sec:diffusion} is applied to $\rvtheta$,
with $\rvtheta_0 := \rvtheta$ and $\rvtheta_t \mid \rvtheta_0 \sim
\mathcal{N}(\sqrt{1-\sigma^2(t)}\,\rvtheta_0,\, \sigma^2(t)\mathbf{I})$
as in~\eqref{eq:vp-diff}. The conditional score network $\mathbf{s}_\phi(\rvtheta_t, \rvx, t)$ is trained via:
\begin{equation}
\LSBI = \E_{t,\, p(\rvtheta_0, \rvx),\, p(\rvtheta_t \mid \rvtheta_0)} \left[ \lambda(t) \| \mathbf{s}_\phi(\rvtheta_t, \rvx, t) - \nabla_{\rvtheta_t} \log p(\rvtheta_t \mid  \rvtheta_0) \|^2 \right],
\end{equation}
where $(\rvtheta_0, \rvx)$ is sampled by drawing $\rvtheta_0 \sim p(\rvtheta)$ from the prior and $\rvx \sim p(\rvx \mid \rvtheta_0)$ from the simulator. Once trained, $\mathbf{s}_\phi$ serves as an amortized estimator of the posterior score: for any observation $\rvx$, posterior samples $\rvtheta \sim p(\rvtheta\mid \rvx)$ can be generated by integrating a reverse-time SDE from Gaussian noise.

\subsection{Density Ratio Estimation}
\label{sec:dre}
Density Ratio Estimation (DRE) \citep{hermans2020likelihood, durkan2020contrastive} estimates density ratios using only samples from two distributions $p$ and $q$, without access to their density functions. By optimizing an objective, DRE recovers the ratio $r=\frac{q}{p}$ at optimality. Representative approaches include KLIEP \citep{sugiyama2007direct}, NRE \citep{hermans2020likelihood}, SDRE \citep{liu2019fisher}, and NRE-$\infty$ \citep{choi2022density}.

More generally, many DRE methods can be unified under generalized frameworks closely related to $f$-divergence estimation \citep{nguyen2010estimating, nowozin2016f}. Let $f$ be a convex, lower semi-continuous generator with convex conjugate $g = f^\star$, and let $h: \gX \to \operatorname{dom}(g)$ be a
variational function, parameterized by a neural network in practice. The objective


\begin{equation}
\gL(h)=-\E_{q}[h]+\E_{p}[g(h)],
\end{equation}
is minimized by a variational function $h^\star$ satisfying, under mild regularity conditions,
\begin{equation}
g'(h^\star)=\frac{q}{p},
\end{equation}

\subsection{Prior Adaptation in Amortized SBI}

A persistent challenge in amortized SBI is prior dependency, where the learned posterior is tied to the fixed prior $p(\rvtheta)$ used during training. To enable prior flexibility without retraining the simulator, two main research directions have emerged.

The first is meta-amortization, which trains models over a meta-distribution of priors to generalize across various target priors (see Appendix~\ref{app:meta_amortization} for details). The second, and more relevant to this work, is inference-time guidance, which adapts a pre-trained posterior during the sampling process. A representative framework is PriorGuide \citep{yang_2025_priorguide}, which serves as our primary baseline. PriorGuide incorporates a guidance term into a  diffusion-based SBI model using the density ratio $r(\rvtheta) = \frac{q(\rvtheta)}{p(\rvtheta)}$ between the target and training priors. The resulting guided score function is:
\begin{equation}
\label{eq:priorguide}
\nabla_{\rvtheta_t} \log q(\rvtheta_t \mid \rvx) 
=
\underbrace{\mathbf{s}_\phi(\rvtheta_t, \rvx, t)}_{\text{original score}}
+
\underbrace{
\nabla_{\rvtheta_t}
\log
\E_{p(\rvtheta_0 \mid \rvtheta_t, \rvx)}
\left[
r(\rvtheta_0)
\right]
}_{\text{guidance term}},
\end{equation}

where $\rvtheta_0$ and $\rvtheta_t$ denote the clean parameters and their noisy counterparts at diffusion step $t$, respectively.

Despite its efficiency, PriorGuide relies on several restrictive structural simplifications. Specifically, it approximates the ratio $r(\rvtheta_0)$ using a Gaussian Mixture Model (GMM) and assumes a Gaussian form for the reverse transition kernel $p(\rvtheta_0 \mid \rvtheta_t, \rvx)$. These heuristics can introduce systematic bias and lead to instability, particularly when the target prior $q(\rvtheta)$ has low overlap with $p(\rvtheta)$.

Our method instead employs DRE to approximate the guidance term directly and unbiasedly, bypassing these structural heuristics and enabling more robust posterior adaptation even under substantial prior mismatch.

\section{Methodology}
\label{sec:Methodology}

Our objective is to develop an unbiased and flexible guidance framework for inference-time prior adaptation. To achieve this, we seek an unbiased, simulation-efficient, and flexible estimator of the guidance term. Once such an estimator is obtained, a diffusion-based SBI model trained under prior $p(\rvtheta)$ can be rapidly adapted to a new target prior $q(\rvtheta)$ at inference time, enabling efficient generation of posterior samples from $q(\rvtheta \mid \rvx) \propto q(\rvtheta)p(\rvx \mid \rvtheta)$ as well as posterior predictive samples from $q(\rvx' \mid \rvx)$.

\subsection{Problem Formulation}
We consider the setting where a score-based diffusion model has been trained on a joint distribution $p(\rvtheta, \rvx) = p(\rvtheta)p(\rvx\mid\rvtheta)$, where $p(\rvtheta)$ denotes the training prior and $p(\rvx\mid\rvtheta)$ represents the simulator-defined likelihood. In the standard amortized paradigm, this pre-trained model serves as a foundation for rapid inference for posterior $p(\rvtheta \mid \rvx)$ or posterior predictive $p(\rvx' \mid \rvx)$. 

However, at inference time, a new target prior $q(\rvtheta)$ may be introduced to reflect updated scientific constraints or external domain knowledge. Assuming the target prior is defined as $q(\rvtheta) = r(\rvtheta)p(\rvtheta)$, where $r(\rvtheta)$ is a known density ratio, the corresponding target posterior becomes
\begin{equation}
q(\rvtheta \mid \rvx)
=
\frac{q(\rvtheta)p(\rvx\mid\rvtheta)}{q(\rvx)}
\propto
q(\rvtheta)p(\rvx\mid\rvtheta).
\end{equation}
Our primary objective is to perform inference on this posterior $q(\rvtheta\mid\rvx)$ and the corresponding posterior predictive distribution $q(\rvx'\mid\rvx)$.

Under the prior shift $q(\rvtheta) = r(\rvtheta)p(\rvtheta)$, the target joint distribution can be rewritten as
\begin{equation}
q(\rvtheta, \rvx)
=
r(\rvtheta)p(\rvtheta, \rvx),
\label{eq:joint_reweighting}
\end{equation}
since the likelihood $p(\rvx\mid\rvtheta)$ remains unchanged. This reweighting relationship forms the basis for the diffusion-space importance weighting strategy developed in Section~\ref{sec:guidance-term}.

\subsection{Guidance Derivation}
\label{sec:guidance-term}

For target posterior sampling, we aim to compute the guided score function $\nabla_{\rvtheta_t} \log q(\rvtheta_t \mid  \rvx)$, which can be decomposed as:
\begin{equation}
    \nabla_{\rvtheta_t} \log q(\rvtheta_t\mid \rvx)
= \nabla_{\rvtheta_t} \log p(\rvtheta_t\mid \rvx) + \nabla_{\rvtheta_t} \log \frac{q(\rvtheta_t\mid \rvx)}{p(\rvtheta_t\mid \rvx)} 
= \underbrace{\nabla_{\rvtheta_t} \log p(\rvtheta_t\mid \rvx)}_{\text{original score}} + \underbrace{\nabla_{\rvtheta_t} \log \frac{q(\rvtheta_t,\rvx)}{p(\rvtheta_t,\rvx)}}_{\text{guidance term}}. \label{eq:guidance-term}
\end{equation}

Notably, our guidance term is equivalent to the expression in Eq.~\ref{eq:priorguide} (see Appendix~\ref{app:guidance-derivation} for a formal proof). This reformulated joint log ratio
$\log\frac{q(\rvtheta_t,\rvx)}{p(\rvtheta_t,\rvx)}$
enables guidance estimation through DRE.

\subsection{DRE-based Guidance for Prior Change}

The reformulation in Eq.~\ref{eq:guidance-term} suggests that prior adaptation reduces to estimating the joint log ratio $\log \frac{q(\rvtheta_t,\rvx)}{p(\rvtheta_t,\rvx)}$. However, DRE objectives for this ratio typically require samples from both $q(\rvtheta_t, \rvx)$ and $p(\rvtheta_t, \rvx)$. Generating samples from the former involves a sequential process: first sampling $\rvtheta \sim q(\rvtheta)$, then invoking the simulator to obtain $\rvx \sim p(\rvx \mid \rvtheta)$, and finally applying the diffusion forward process to reach $\rvtheta_t \sim q(\rvtheta_t \mid \rvtheta)$. This process is computationally expensive and contradicts our primary objective of maintaining a fast and flexible inference pipeline. To avoid direct sampling from $q(\rvtheta_t, \rvx)$, we derive the following importance weighting identity, which we will use below to estimate the guidance term via two DRE methods: NRE and KLIEP.

\begin{proposition}
\label{proposition:diffusion_importance_weighting}

Define the prior ratio
$
r(\rvtheta_0)
=
\frac{q(\rvtheta_0)}
     {p(\rvtheta_0)}.
$
Further assume that the same forward diffusion process
$p(\rvtheta_t \mid \rvtheta_0)$
is applied under both priors.
Then, for any measurable and integrable function
$f(\rvtheta_t, \rvx, t)$,

\begin{equation}
\E_{q(\rvtheta_t,\rvx)}
[f(\rvtheta_t, \rvx, t)]
=
\E_{p(\rvtheta_t,\rvtheta_0,\rvx)}
\left[
r(\rvtheta_0)
f(\rvtheta_t, \rvx, t)
\right].
\label{eq:general_importance_weighting}
\end{equation}

\end{proposition}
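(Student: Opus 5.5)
The plan is to write both sides as integrals against explicit factorizations of the joint laws and compare them. First, I will make precise the generative structure under each prior. Under the training prior, the three-variable joint factorizes as
\begin{equation*}
p(\rvtheta_t,\rvtheta_0,\rvx) = p(\rvtheta_t \mid \rvtheta_0)\,p(\rvx\mid\rvtheta_0)\,p(\rvtheta_0).
\end{equation*}
This uses the fact that the forward noising acts only on $\rvtheta_0$ and is conditionally independent of $\rvx$ given $\rvtheta_0$. Under the target prior, by the assumption that the same forward kernel is used and the likelihood is unchanged (Eq.~\ref{eq:joint_reweighting}), the analogous joint is
\begin{equation*}
q(\rvtheta_t,\rvtheta_0,\rvx) = p(\rvtheta_t \mid \rvtheta_0)\,p(\rvx\mid\rvtheta_0)\,q(\rvtheta_0).
\end{equation*}
The key consequence is the pointwise identity $q(\rvtheta_t,\rvtheta_0,\rvx) = r(\rvtheta_0)\,p(\rvtheta_t,\rvtheta_0,\rvx)$, which lifts Eq.~\ref{eq:joint_reweighting} to the diffused variables.

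Second, I will express the left-hand side of Eq.~\ref{eq:general_importance_weighting} as an integral over the three-variable joint. Since $f$ does not depend on $\rvtheta_0$, and $q(\rvtheta_t,\rvx)=\int q(\rvtheta_t,\rvtheta_0,\rvx)\,d\rvtheta_0$, Fubini--Tonelli gives
\begin{equation*}
\E_{q(\rvtheta_t,\rvx)}[f] = \iiint f(\rvtheta_t,\rvx,t)\,q(\rvtheta_t,\rvtheta_0,\rvx)\,d\rvtheta_0\,d\rvtheta_t\,d\rvx.
\end{equation*}
Substituting the pointwise identity turns the integrand into $r(\rvtheta_0) f\, p(\rvtheta_t,\rvtheta_0,\rvx)$, which is exactly the right-hand side.

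The only delicate step is justifying the interchange of integrals and the substitution. This requires integrability of $f$ under $q(\rvtheta_t,\rvx)$, which is assumed, and absolute continuity $q(\rvtheta_0)\ll p(\rvtheta_0)$, so that $r$ is well defined wherever $q$ puts mass. I will apply Tonelli to $|f|$ first, which shows that $r|f|$ is integrable under $p(\rvtheta_t,\rvtheta_0,\rvx)$ with the same finite value. Fubini then applies to $f$ itself. I will state the support condition explicitly as the implicit hypothesis behind defining $r$.
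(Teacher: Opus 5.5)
Your proposal is correct and follows the paper's own argument. You lift $q(\rvtheta_0,\rvx)=r(\rvtheta_0)p(\rvtheta_0,\rvx)$ through the shared forward kernel to the pointwise identity $q(\rvtheta_t,\rvtheta_0,\rvx)=r(\rvtheta_0)\,p(\rvtheta_t,\rvtheta_0,\rvx)$, then marginalize out $\rvtheta_0$; your absolute-continuity condition $q(\rvtheta_0)\ll p(\rvtheta_0)$ and the Tonelli-then-Fubini step make explicit what the paper's sketch leaves implicit.
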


\textbf{Sketch of proof:} Under our assumptions, we have the following identities
\begin{equation}
p(\rvtheta_t, \rvtheta_0, \rvx)
=
p(\rvtheta_t\mid\rvtheta_0)p(\rvtheta_0, \rvx),
\qquad
q(\rvtheta_t, \rvtheta_0, \rvx)
=
p(\rvtheta_t\mid\rvtheta_0)q(\rvtheta_0, \rvx).
\end{equation}
Thus, $$
q(\rvtheta_t, \rvtheta_0, \rvx) = 
p(\rvtheta_t\mid\rvtheta_0)q(\rvtheta_0, \rvx) = p(\rvtheta_t\mid\rvtheta_0)\underbrace{r(\rvtheta_0)p(\rvtheta_0, \rvx)}_{\text{Eq.} \ref{eq:joint_reweighting}} = r(\rvtheta_0) p(\rvtheta_t, \rvtheta_0, \rvx),$$
and the conclusion follows by substituting $q(\rvtheta_t, \rvx)$ in the expectation expression.

This reformulation enables simulation-free training by leveraging samples from the original joint distribution $p(\rvtheta_t, \rvtheta_0, \rvx) = p(\rvtheta_0, \rvx) p(\rvtheta_t \mid \rvtheta_0)$. Specifically, the required samples $(\rvtheta_0, \rvx)$ can be directly reused from the base SBI model's training set. By simply perturbing $\rvtheta_0$ via the diffusion forward kernel $p(\rvtheta_t \mid \rvtheta_0)$, the adaptation process circumvents any additional simulator calls.

We now instantiate this identity for NRE and KLIEP in turn.

\paragraph{NRE.}
NRE estimates the log density ratio 
$\log \frac{q(\rvtheta_t,\rvx)}{p(\rvtheta_t,\rvx)}$
by training a discriminator $a_\psi(\rvtheta_t, \rvx, t)$ with objective
\begin{equation}
\LNRE
= \E_t\!\left[
- \E_{q(\rvtheta_t,\rvx)}\!\left[\log S(a_\psi)\right]
- \E_{p(\rvtheta_t,\rvx)}\!\left[\log\bigl(1 - S(a_\psi)\bigr)\right]
\right],
\label{eq:NRE loss}
\end{equation}
where $a_\psi$ is shorthand for $a_\psi(\rvtheta_t, \rvx, t)$ and $S$ is sigmoid function. In practice, the loss is estimated using an equal number of samples drawn from each of $q(\rvtheta_t, \rvx)$ and $p(\rvtheta_t, \rvx)$.

Applying Prop.~\ref{proposition:diffusion_importance_weighting} to Eq.~\ref{eq:NRE loss}, the objective becomes:

\begin{equation}
\LNRE
=
\E_t\left[-\E_{p(\rvtheta_t,\rvtheta_0,\rvx)}
[r(\rvtheta_0)\log S(a_\psi)]
-
\E_{p(\rvtheta_t,\rvx)}
[\log(1-S(a_\psi))]\right].
\label{eq:nre-a-loss}
\end{equation}

Finally, the guidance term is recovered via the gradient of the optimum $a^\star$: $\nabla_{\rvtheta_t} \log \frac{q(\rvtheta_t,\rvx)}{p(\rvtheta_t,\rvx)} = \nabla_{\rvtheta_t} a^\star(\rvtheta_t, \rvx, t)$.

\paragraph{KLIEP.}
KLIEP estimates the same log density ratio by training an estimator $a_\psi(\rvtheta_t, \rvx, t)$ via the following objective
\begin{equation}
\LKLIEP
=
\E_t\left[-\E_{q(\rvtheta_t,\rvx)}[a_\psi]
+
\log
\E_{p(\rvtheta_t,\rvx)}[\exp a_\psi]\right].
\end{equation}

Applying Prop.~\ref{proposition:diffusion_importance_weighting} yields
\begin{equation}
\LKLIEP
=
\E_t\left[-\E_{p(\rvtheta_t,\rvtheta_0,\rvx)}
[r(\rvtheta_0)a_\psi]+\log
\E_{p(\rvtheta_t,\rvx)}[\exp a_\psi]\right].
\label{eq:kliep-loss}
\end{equation}

which, like $\LNRE$ in Eq.~\ref{eq:nre-a-loss}, admits the same simulation-free training procedure. The guidance term is likewise recovered via the gradient of the optimum $a^\star$: $\nabla_{\rvtheta_t} \log \frac{q(\rvtheta_t,\rvx)}{p(\rvtheta_t,\rvx)} = \nabla_{\rvtheta_t} a^\star(\rvtheta_t, \rvx, t)$.

The two examples above illustrate an important observation: although derived from different objectives, both NRE and KLIEP ultimately estimate the same diffusion-space log density ratio $\log \frac{q(\rvtheta_t,\rvx)}{p(\rvtheta_t,\rvx)}.$

More broadly, many density ratio estimation methods can be interpreted within a unified variational DRE framework, where different objectives recover the same ratio at optimality. This observation motivates a general estimator-agnostic guidance framework for inference-time prior adaptation.

\subsection{General DRE-based Guidance Framework}

We now extend the general DRE objectives of Section~\ref{sec:dre} to the diffusion joint ratio, yielding a general framework for guidance term estimation.

\begin{theorem}
\label{thm:general_ratio}

Define the prior ratio $r(\rvtheta_0) = \frac{q(\rvtheta_0)}{p(\rvtheta_0)}$. Consider the generalized density ratio estimation objective:
\begin{equation}
h^\star = \arg\min_h \left\{ \E_t\left[- \E_{p(\rvtheta_t,\rvtheta_0,\rvx)} \left[ r(\rvtheta_0) h(\rvtheta_t, \rvx, t) \right] + \E_{p(\rvtheta_t,\rvx)} \left[ g(h(\rvtheta_t, \rvx, t)) \right]\right] \right\},
\label{eq:general_loss}
\end{equation}
where $g:\R\to\R$ is differentiable and strictly convex.

Then the optimal solution satisfies:
\begin{equation}
g'\!\left(h^\star(\rvtheta_t, \rvx, t)\right) =  \frac{q(\rvtheta_t,\rvx)}{p(\rvtheta_t,\rvx)},
\end{equation}
Consequently, the corresponding guidance term is recovered via:
\begin{equation}
\nabla_{\rvtheta_t} \log \left( g'\!\left(h^\star(\rvtheta_t, \rvx, t)\right) \right) = \nabla_{\rvtheta_t} \log \frac{q(\rvtheta_t,\rvx)}{p(\rvtheta_t,\rvx)}.
\end{equation}

\end{theorem}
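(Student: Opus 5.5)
The plan is to first use Proposition~\ref{proposition:diffusion_importance_weighting} to rewrite the importance-weighted term as an expectation under $q(\rvtheta_t,\rvx)$, and then minimize the objective pointwise in $(\rvtheta_t,\rvx,t)$. Applying Eq.~\ref{eq:general_importance_weighting} with $f = h$, for each fixed $t$ we have
\begin{equation*}
\E_{p(\rvtheta_t,\rvtheta_0,\rvx)}\left[ r(\rvtheta_0) h(\rvtheta_t,\rvx,t)\right] = \E_{q(\rvtheta_t,\rvx)}\left[h(\rvtheta_t,\rvx,t)\right].
\end{equation*}
This requires $h(\cdot,\cdot,t)$ to be integrable under $q(\rvtheta_t,\rvx)$, which we impose on the admissible class of $h$. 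The objective in Eq.~\ref{eq:general_loss} then becomes
\begin{equation*}
\E_t \int \Bigl[ -q(\rvtheta_t,\rvx)\, h(\rvtheta_t,\rvx,t) + p(\rvtheta_t,\rvx)\, g\bigl(h(\rvtheta_t,\rvx,t)\bigr) \Bigr] \, d\rvtheta_t\, d\rvx .
\end{equation*}

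Since $h$ is an arbitrary measurable function of $(\rvtheta_t,\rvx,t)$ with no coupling between different points, the integral is minimized by minimizing the integrand separately at each point. Fix $(\rvtheta_t,\rvx,t)$ with $p(\rvtheta_t,\rvx)>0$. Because the forward kernel is Gaussian, this holds on all of the space for $t>0$. Consider the scalar function $\phi(u) = -q\,u + p\,g(u)$. Strict convexity of $g$ together with $p>0$ makes $\phi$ strictly convex, so any stationary point is its unique minimizer. The first-order condition $\phi'(u) = -q + p\,g'(u) = 0$ gives $g'(u^\star) = q/p$. Setting $h^\star(\rvtheta_t,\rvx,t) = u^\star$ yields $g'(h^\star) = q(\rvtheta_t,\rvx)/p(\rvtheta_t,\rvx)$. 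Conversely, any minimizer must agree with this choice for almost every point and $t$, since otherwise the integrand would be strictly larger on a set of positive measure.

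The main obstacle is the existence of the stationary point: we need $q/p$ to lie in the range of $g'$. For the instances used here this holds: $g(u)=\log(1+e^u)$ for NRE gives $g'(u)=S(u)\cdot$(appropriately scaled form), and $g(u)=e^u$ gives $g'$ with range $(0,\infty)$. In general this is the "mild regularity condition" of Section~\ref{sec:dre}, and I would state it as an assumption: $q/p \in g'(\R)$, which is an open interval because $g'$ is strictly increasing. I would also note that $q(\rvtheta_t,\rvx)>0$ wherever $q(\rvx)>0$ thanks to the full-support Gaussian smoothing, so $\log$ of the ratio is well-defined.

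For the guidance identity, the ratio $q(\rvtheta_t,\rvx)/p(\rvtheta_t,\rvx)$ is positive and differentiable in $\rvtheta_t$, because both densities are Gaussian convolutions of the marginals of $\rvtheta_0$. Hence $\log g'(h^\star) = \log\frac{q(\rvtheta_t,\rvx)}{p(\rvtheta_t,\rvx)}$ holds pointwise, and taking $\nabla_{\rvtheta_t}$ of both sides gives the final claim. Differentiability of $\log g'(h^\star)$ is inherited directly from that of the right-hand side. Combining this with Eq.~\ref{eq:guidance-term} identifies it as the guidance term.
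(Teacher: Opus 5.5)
Your proof is correct and is essentially the paper's argument: you convert the $r(\rvtheta_0)$-weighted term into an expectation under $q(\rvtheta_t,\rvx)$ via Proposition~\ref{proposition:diffusion_importance_weighting}, minimize $-q\,h+p\,g(h)$ pointwise using strict convexity to get $g'(h^\star)=q/p$, and take $\nabla_{\rvtheta_t}\log$; the paper does the pointwise step first and the change of measure afterwards, which is immaterial, and your extra hypothesis $q/p\in g'(\R)$ is a genuine condition the paper leaves implicit. One correction to your aside: $g(u)=\log(1+e^u)$ has $g'=S$ with range $(0,1)$, so it would violate that very condition; the paper's NRE instance is instead $g(v)=-\log(1-e^{v})$ on $v<0$ with $h=\log S(a_\psi)$, whose derivative $e^{v}/(1-e^{v})$ ranges over $(0,\infty)$.
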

This general form is instantiated via $h(\rvtheta_t, \rvx, t) = \sigma_g(a_\psi(\rvtheta_t, \rvx, t))$, where $a_\psi\in\R$ is an unconstrained network output and $\sigma_g:\R\to\mathrm{dom}(g)$ is an activation function determined by the choice of generator $g$. NRE corresponds to $(\sigma_g, g) = (\log S,\, -\log(1-e^h))$, recovering Eq.~\ref{eq:nre-a-loss} exactly as a special case. 
KLIEP shares the same underlying network $a_\psi$ but its global log-partition structure falls outside this pointwise template; we show separately that it recovers the same guidance term at optimality (Appendix~\ref{app:kliep_special_case}). More generally, different choices of $g$ induce different density ratio estimators within this shared framework, which we summarize in Table~\ref{tab:f_divergence} (Appendix~\ref{app:common_instances}). We provide the formal proof of Theorem~\ref{thm:general_ratio} in Appendix~\ref{app:general_ratio}.

Beyond its generality, this framework inherits a natural stabilization mechanism from diffusion. By performing DRE over diffused states $\rvtheta_t$ rather than clean samples, the inherent Gaussian noise ensures that the distributions remain well-supported. This prevents the density ratio from becoming ill-defined, ensuring robust guidance even when the target and training priors have mismatched supports.

\section{Experiments}
\label{sec:Experiments}

We empirically evaluate DRE-based guidance across a range of SBI problems, focusing on its ability to adapt to new priors at test time for both posterior and posterior predictive inference. In our experiments, we choose Simformer \citep{gloeckler2024all} as the base diffusion model.
In Section~\ref{sec:numerical_experiments}, we assess performance on several numerical SBI benchmarks, comparing our approach with existing methods for prior adaptation. 

In Section~\ref{sec:lightcurve}, we demonstrate the applicability of our method to a real-world light curve task via sequential posterior updating, using a Bayesian update algorithm built upon our framework (see Algorithm~\ref{alg:prior_guided} in Appendix~\ref{app:alg}).

\subsection{Numerical Experiments}
\label{sec:numerical_experiments}

We evaluate our method on a diverse set of simulation-based inference (SBI) benchmarks, focusing on its ability to adapt to prior change at test time. We consider two regimes:
\begin{itemize}
    \item \textbf{In-distribution (ID):} The training prior provides sufficient support for the target prior.
    \item \textbf{Out-of-distribution (OOD):} The target prior shift results in poor overlap with the training support.
\end{itemize}

\paragraph{Competitors. }
We compare our DRE-based guidance, implemented via NRE and KLIEP, against the vanilla Simformer (no adaptation), PriorGuide, and weighted score matching (WSM), a baseline that learns an additive score correction via a prior-ratio-weighted denoising score matching objective (Appendix~\ref{app:wsm}). Our method, PriorGuide, and WSM use the same setting and are therefore directly comparable: all three involve only lightweight test-time training, leave the base model unchanged, require no new simulator calls, and access the prior change only through the ratio
$q(\rvtheta_0)/p(\rvtheta_0)$.

\paragraph{Reference methods. }
Two additional reference methods are reported in gray and excluded from the ranked comparison.
\textbf{Simformer+SIR} reweights a finite pool of samples drawn from the base model by the prior ratio and resamples from that pool, so every returned sample is an exact copy of one already produced under the base prior. When the target posterior is poorly covered by the pool, the importance weights concentrate on a few samples and the resampled set collapses onto a small number of distinct points, severely limiting posterior sample diversity — a practical drawback in its own right. This mechanism is categorically different from our method and PriorGuide, both of which produce fresh samples at test time, and it interacts with the evaluation metrics in an asymmetric way. Exact duplication leaves the kernel mean embedding and the transport cost almost unchanged, so MMD and $W_2$ remain small, while a classifier trained to separate the two sample sets exploits the repeated points directly, systematically inflating C2ST. As a result, no single ranking of Simformer+SIR against our method and the remaining baselines is consistent across metrics, and we therefore report it as an importance-resampling reference rather than as an entry in the ranked comparison. \textbf{Analytic guidance}, available only for the Gaussian Linear task where the guidance term admits a closed form, serves as an oracle reference representing performance with exact guidance.

\subsubsection{Posterior Inference under Prior Shift}
\label{sec:posterior}

We evaluate posterior inference performance on five representative SBI benchmarks: Two Moons \citep{lueckmann2021benchmarking}, SLCP \citep{papamakarios2019sequential}, Gaussian Linear 6D \citep{lueckmann2021benchmarking}, OUP \citep{uhlenbeck1930theory}, and the Turin model \citep{turin1972statistical} (details in Appendix~\ref{app:simulators}). These tasks span low- to high-dimensional spaces and include both static and time-series data. We consider both ID and OOD regimes.

For each regime, we randomly draw 10 independent observations from the target marginal distribution of the observations. For each observation, we generate 5,000 posterior samples and repeat this sampling process across 3 independent runs to ensure statistical reliability. Performance is evaluated using C2ST, MMD, $W_2$, and RMSE. Due to space constraints, we present C2ST scores ($\downarrow$ better, chance level = 0.5) in the main text (Figure~\ref{fig:c2st}), while full results are reported in Table~\ref{tab:posterior} of Appendix~\ref{app:experiment_res}.

Overall, our methods (NRE and KLIEP) achieve the best or are not significantly worse than the best-performing method on most tasks, with the largest margins in the OOD regime: on Gaussian Linear 6D and OUP, our DRE-based guidance attains C2ST scores of $\sim$0.58 and $\sim$0.54, versus $\sim$0.69 and $\sim$0.65 for PriorGuide, and on SLCP (OOD), our NRE is the only method in the best-performing group. By comparison, WSM, despite targeting the same setting, generally underperforms our DRE-based methods and exhibits high variance on some tasks, suggesting that learning the score correction by weighted regression may be more difficult than estimating the density ratio directly.
Figure~\ref{fig:c2st} shows two failure cases, Two Moons (OOD) and Turin, which arise for distinct reasons; we analyse both in Appendix~\ref{app:failure}.

\begin{figure}[h]
    \centering
    \includegraphics[width=1.0\linewidth]{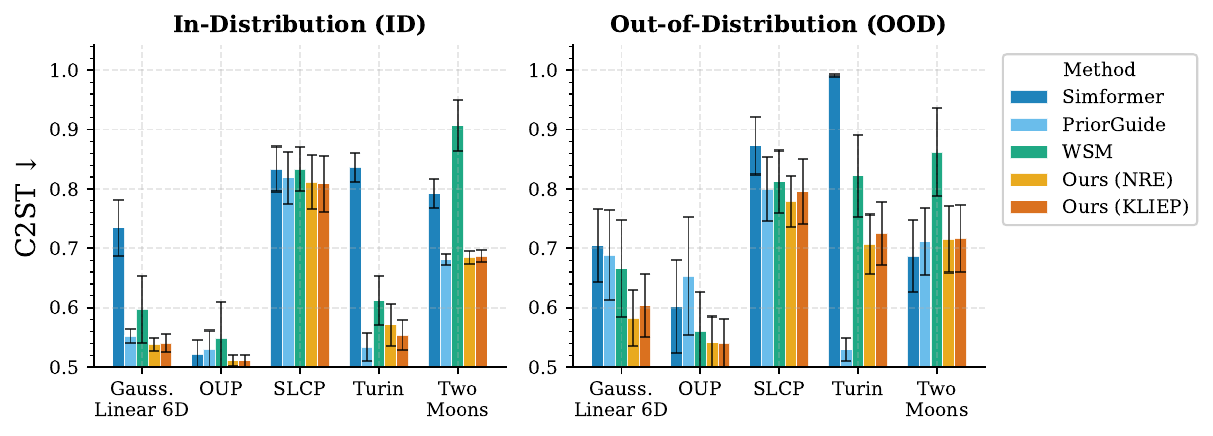}
\caption{
C2ST scores ($\downarrow$ better, chance level = 0.5) across five benchmark tasks under ID and OOD settings. 
Our methods (NRE and KLIEP) are best or not significantly worse than the best method on most tasks, with the largest gains under OOD shift; Two Moons and Turin are exceptions, analysed in Appendix~\ref{app:failure}.
}
    \label{fig:c2st}
\end{figure}

\subsubsection{Posterior Predictive under Prior Shift}
\label{sec:predictive}

We evaluate posterior predictive performance under both ID and OOD regimes on two time-series SBI tasks: the OUP and Turin models. For all tasks, the first one-third of trajectories are provided as observed, and the remaining two-thirds are predicted.

While RMSE is comparable for most methods and settings (Table~\ref{tab:posterior_predictive_results} in Appendix~\ref{app:experiment_res}), qualitative evaluation in Figure~\ref{fig:predictive} reveals distinct failure modes. In OUP (OOD) (Fig.~\ref{fig:predictive}a), PriorGuide's predictive distribution is systematically biased, a failure that RMSE also registers. In Turin (ID) (Fig.~\ref{fig:predictive}b), all methods attain comparable RMSE, yet PriorGuide misplaces predictive mass in the upper tail. These observations are consistent with the posterior inference results: PriorGuide's structural approximation can introduce bias, whereas our DRE-based guidance and WSM exhibit neither failure mode in these experiments, providing stable predictions even when RMSE values are comparable. Additional results are provided in Appendix~\ref{app:experiment_res}.

\begin{figure}[h]
    \centering
    \includegraphics[width=1.0\linewidth]{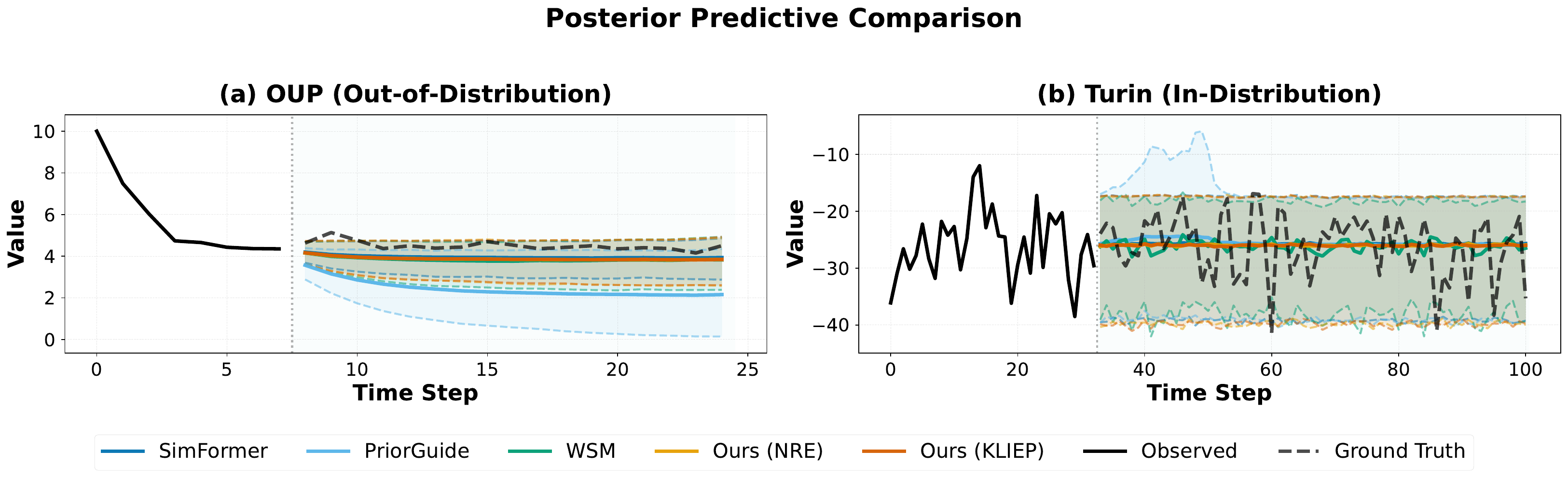}
    \caption{Posterior predictive comparison on time series forecasting. (a) OUP (OOD), (b) Turin (ID). Solid lines show predictive means, dashed lines and shaded regions indicate $95\%$ confidence intervals. Black lines represent observed data (solid) and ground truth (dashed). Vertical line marks the prediction boundary.}
    \label{fig:predictive}
\end{figure}

\subsection{Planet Light Curve Posterior Update}
\label{sec:lightcurve}

We evaluate our method on a sequential inference task using stellar light curves from the TESS mission \citep{ricker2015transiting}. Specifically, we consider three exoplanet systems: GJ 436 b, WASP-39 b, and WASP-126 b. For each system, we extract a sequence of photometric observations $\{\rvx^{(n)}_{\text{obs}}\}_{n=1}^{N}$, where each $\rvx^{(n)}_{\text{obs}} \in \R^{128}$ is a normalized light-curve segment centered on a planetary transit event (see Figure~\ref{fig:lc_obs} in Appendix~\ref{app:lc_obs} for an
example sequence of observation windows for WASP-39\,b). The goal is to infer 8 parameters $\rvtheta \in \R^8$, including orbital geometry $(k, a, \mathrm{inc}, P)$, stellar effects $(u_1, u_2)$, and observational parameters $(t_0, \sigma)$ (detailed descriptions in Table~\ref{tab:lc_priors}). We report results on $(k, a, \mathrm{inc}, P)$.

\paragraph{Sequential Posterior Guidance.} 
As new observing sectors become available, the posterior from previous steps serves as the prior for subsequent inference. Assuming conditional independence, the posterior evolves as
\[
p(\rvtheta \mid \rvx^{(1:n+1)}) \propto p(\rvx^{(n+1)} \mid \rvtheta)\, p(\rvtheta \mid \rvx^{(1:n)}).
\]
We train an amortized diffusion model for $p(\rvtheta \mid \rvx)$ and use DRE to estimate $r^{(n)}(\rvtheta) \approx p(\rvtheta \mid \rvx^{(1:n)})/p(\rvtheta)$, enabling posterior guidance at each step. This yields a sequence $p(\rvtheta), p(\rvtheta \mid \rvx^{(1)}), \dots, p(\rvtheta \mid \rvx^{(1:N)})$. The full sequential update procedure is summarized in Algorithm~\ref{alg:prior_guided} in Appendix~\ref{app:alg}.

\paragraph{Results.}
Figure~\ref{fig:lc_1} shows that our method shifts and concentrates the posterior toward the reference values as observations accumulate, demonstrating stable sequential updating. In contrast, PriorGuide becomes unstable after a few updates, producing over-dispersed or collapsed posteriors. 

This trend is consistent across other systems (Figures~\ref{fig:lc_2} and~\ref{fig:lc_3}) and is further reflected in the RMSE comparisons shown in Figure~\ref{fig:lc_rmse}: our method maintains low and stable error, with slight improvements as more observations are incorporated, while PriorGuide exhibits progressively increasing error across sequential updates, suggesting the accumulation of guidance bias over time.

\begin{figure}[h]
    \centering
    \includegraphics[width=1\linewidth]{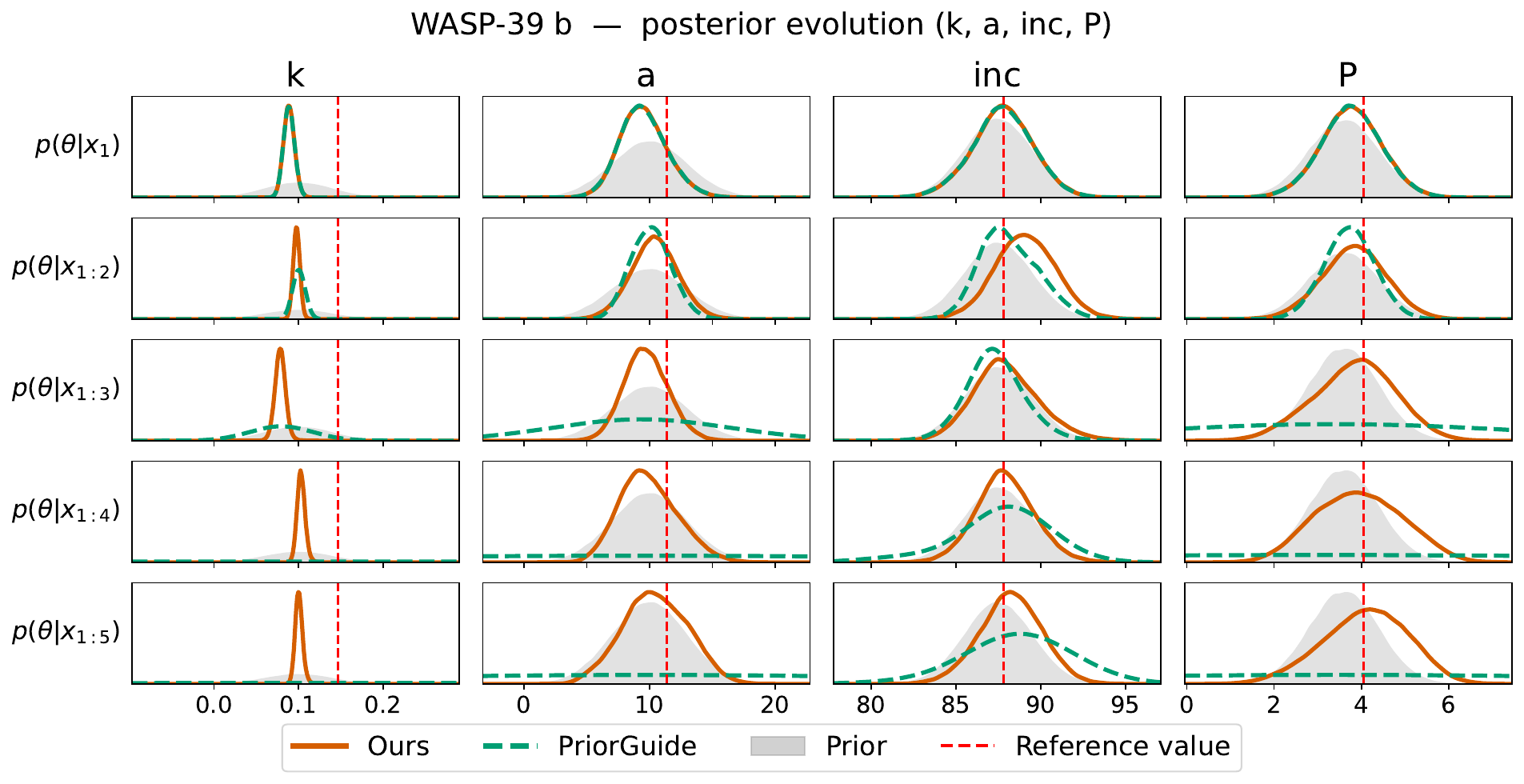}
    \caption{Sequential posterior evolution for WASP-39b over $5$ observations. The $n$-th row shows $p(\rvtheta \mid \rvx^{(1:n)})$ for parameters $k$, $a$, inc, and $P$. Grey: prior; red dashed: reference value from \citep{faedi2011wasp}.}
    \label{fig:lc_1}
\end{figure}

Additional posterior evolution visualizations (Figures~\ref{fig:lc_2} and~\ref{fig:lc_3}) and RMSE comparisons (Figure~\ref{fig:lc_rmse}) are provided in Appendix~\ref{app:lc_res}.

\section{Discussion}
\label{sec:Discussion}

Our results highlight that accurate guidance estimation is key to prior adaptation in amortized SBI. By learning the gradient of the joint density ratio via DRE, our method avoids the structural approximations used in PriorGuide and provides an unbiased estimate of the guidance term. Empirically, we observe gains on the majority of tasks, particularly under out-of-distribution (OOD) prior shifts, where PriorGuide degrades due to biased guidance.

Beyond accuracy, the two methods also differ in cost. Neither our method nor PriorGuide requires additional simulator calls, and both require fitting a lightweight model for each target prior: a ratio estimator in our case, a GMM for the prior ratio in theirs. Neither is therefore purely test-time. At inference, however, our sampling requires none of the Langevin correction that PriorGuide relies on, a cost that recurs at every observation.

Performance of our method depends on the quality of the learned ratio estimator, which degrades in high dimensions; compressing observations with a learned embedding network is the standard remedy and an important direction for future work. We also observe a case where the prior ratio is nearly flat over the posterior support, in which guidance carries little signal for any method.

Overall, our framework provides a practical test-time-training approach to prior adaptation, enabling flexible adaptation without retraining the base model, which is particularly useful in scientific workflows with evolving prior knowledge.

\bibliographystyle{plain}
\bibliography{references}

\appendix
\newpage

\section{Meta-Amortization Methods for Prior Change}
\label{app:meta_amortization}
Meta-amortization strategies aim to learn a functional mapping that accepts both observed data $\rvx$ and a formal prior specification $p(\rvtheta)$ as inputs. This is achieved by training the model over a ``meta-prior''---a distribution or a predefined set of possible prior specifications. For instance, \textbf{Sensitivity-aware SBI} \citep{elsemuller2023sensitivity} trains across a discrete set of alternative priors to provide efficient sensitivity analysis. More general frameworks like the \textbf{Distribution Transformer (DT)} \citep{whittle_2025_distribution} and the \textbf{Amortized Conditioning Engine (ACE)} \citep{chang_2024_amortized} employ attention-based architectures to process prior specifications represented as GMMs or histograms. While these methods enable rapid adaptation at runtime, their flexibility is fundamentally bounded by the meta-prior seen during training, often struggling to generalize to prior geometries that deviate significantly from the pre-training set.

\section{Equivalence of Guidance Terms in Eq.\ref{eq:priorguide}}
\label{app:guidance-derivation}

In this section, we show the equivalence between our guidance term in Eq.~\ref{eq:guidance-term} and the one in Eq.~\ref{eq:priorguide}. By Eq.~\ref{eq:joint_reweighting}, the target and original joint distributions are related by $q(\rvtheta_0, \rvx) = r(\rvtheta_0) p(\rvtheta_0, \rvx)$. Since the transition kernel $p(\rvtheta_t \mid \rvtheta_0, \rvx)$ is independent of the prior, the marginal joint distribution $q(\rvtheta_t, \rvx)$ can be simplified as:
\begin{equation}
q(\rvtheta_t, \rvx) = \int r(\rvtheta_0) p(\rvtheta_0, \rvx) p(\rvtheta_t \mid \rvtheta_0, \rvx) d\rvtheta_0 = \int r(\rvtheta_0) p(\rvtheta_t, \rvtheta_0, \rvx) d\rvtheta_0.
\end{equation}
The ratio between the target and original joint distributions is then:
\begin{equation}
\begin{aligned}
\frac{q(\rvtheta_t,\rvx)}{p(\rvtheta_t,\rvx)} &= \frac{\int r(\rvtheta_0) p(\rvtheta_t, \rvtheta_0, \rvx) d\rvtheta_0}{p(\rvtheta_t, \rvx)} \\
&= \int r(\rvtheta_0) \frac{p(\rvtheta_t, \rvtheta_0, \rvx)}{p(\rvtheta_t, \rvx)} d\rvtheta_0 \\
&= \int r(\rvtheta_0) p(\rvtheta_0 \mid \rvtheta_t, \rvx) d\rvtheta_0 \\
&= \E_{p(\rvtheta_0 \mid \rvtheta_t, \rvx)} \left[ r(\rvtheta_0) \right].
\end{aligned}
\end{equation}
Taking the gradient of the logarithm on both sides yields the equivalence:
\begin{equation}
\nabla_{\rvtheta_t} \log \frac{q(\rvtheta_t,\rvx)}{p(\rvtheta_t,\rvx)} = \nabla_{\rvtheta_t} \log \E_{p(\rvtheta_0 \mid \rvtheta_t, \rvx)} \left[ r(\rvtheta_0) \right].
\end{equation}

\section{Proof of Theorem~\ref{thm:general_ratio}}
\label{app:general_ratio}

\begin{proof}
Following the variational framework for density ratio estimation \citep{nguyen2010estimating, sugiyama2007direct}, we consider the problem of estimating the ratio of two distributions by minimizing a divergence-based objective. Since the objective in Eq.~\ref{eq:general_loss} is separable across diffusion timesteps $t$, we can analyze the functional pointwise for a fixed $t$:
\begin{equation}
\gL(h) = - \E_{q(\rvtheta_t,\rvx)} [h(\rvtheta_t, \rvx, t)] + \E_{p(\rvtheta_t,\rvx)} [g(h(\rvtheta_t, \rvx, t))].
\end{equation}
By expressing the first term as an expectation over the training distribution $p(\rvtheta_t, \rvx)$, the functional can be rewritten as:
\begin{equation}
\gL(h) = \E_{p(\rvtheta_t,\rvx)} \left[ -\frac{q(\rvtheta_t,\rvx)}{p(\rvtheta_t,\rvx)} h(\rvtheta_t, \rvx, t) + g(h(\rvtheta_t, \rvx, t)) \right].
\end{equation}
For a fixed state $(\rvtheta_t, \rvx, t)$, we minimize the integrand $\ell(h) = -\frac{q(\rvtheta_t,\rvx)}{p(\rvtheta_t,\rvx)} h + g(h)$. Given that $g$ is strictly convex, the unique minimizer $h^\star$ is obtained by setting the first-order derivative to zero:
\begin{equation}
\frac{\partial \ell(h)}{\partial h} = -\frac{q(\rvtheta_t,\rvx)}{p(\rvtheta_t,\rvx)} + g'(h) = 0 \implies g'(h^\star) = \frac{q(\rvtheta_t,\rvx)}{p(\rvtheta_t,\rvx)}. \label{eq:optimality-condition-final}
\end{equation}

To facilitate simulation-free training, we apply the change-of-measure identity from Proposition~\ref{proposition:diffusion_importance_weighting}, allowing the expectation over $q(\rvtheta_t, \rvx)$ to be computed using samples from the original joint distribution $p(\rvtheta_t, \rvtheta_0, \rvx)$:
\begin{equation}
\E_{q(\rvtheta_t,\rvx)} [h(\rvtheta_t, \rvx, t)] = \E_{p(\rvtheta_t, \rvtheta_0, \rvx)} [r(\rvtheta_0) h(\rvtheta_t, \rvx, t)],
\end{equation}
where $r(\rvtheta_0) = q(\rvtheta_0)/p(\rvtheta_0)$ is the prior ratio. Substituting this into the variational functional yields the final objective:
\begin{equation}
h^\star = \arg\min_h \E_t \left[ - \E_{p(\rvtheta_t, \rvtheta_0, \rvx)} [r(\rvtheta_0) h(\rvtheta_t, \rvx, t)] + \E_{p(\rvtheta_t,\rvx)} [g(h(\rvtheta_t, \rvx, t))] \right].
\end{equation}
Taking the gradient of the log-ratio recovers the required guidance term:
\begin{equation}
\nabla_{\rvtheta_t} \log \frac{q(\rvtheta_t,\rvx)}{p(\rvtheta_t,\rvx)} = \nabla_{\rvtheta_t} \log \left( g'\!\left(h^\star(\rvtheta_t, \rvx, t)\right) \right).
\end{equation}
This completes the proof.
\end{proof}

\section{Special cases of General DRE-based Guidance}

\subsection{Derivation of KLIEP as an Extended Instance}
\label{app:kliep_special_case}
Unlike NRE, the KLIEP loss $\LKLIEP$ in Eq.~\ref{eq:kliep-loss} is not a pointwise instance of Theorem~\ref{thm:general_ratio}. We show below that it can be reduced to the same variational form.

Fix $t$ and write $a = a_\psi(\cdot,\cdot,t)$. Let
\begin{equation}
Z = \E_{p(\rvtheta_t,\rvx)}[\exp a],
\qquad
b = a - \log Z + 1.
\end{equation}
Since $Z$ is constant with respect to $(\rvtheta_t,\rvx)$,
\begin{equation}
e^{b-1} = \frac{e^{a}}{Z},
\qquad
\E_{p(\rvtheta_t,\rvx)}[\exp(b-1)] = \frac{\E_{p(\rvtheta_t,\rvx)}[\exp a]}{Z} = 1.
\label{eq:kliep-h-constraint}
\end{equation}
Moreover,
\begin{equation}
\E_{q(\rvtheta_t,\rvx)}[b] = \E_{q(\rvtheta_t,\rvx)}[a] - \log Z + 1.
\end{equation}
Therefore, 
\begin{align}
\LKLIEP &= \E_t\Bigl[-\E_{q(\rvtheta_t,\rvx)}[a] + \log\E_{p(\rvtheta_t,\rvx)}[\exp a]\Bigr]
= \E_t\Bigl[-\E_{q(\rvtheta_t,\rvx)}[b] - \log Z + 1 + \log Z\Bigr] \notag\\
&= \E_t\Bigl[-\E_{q(\rvtheta_t,\rvx)}[b] + \E_{p(\rvtheta_t,\rvx)}[\exp(b-1)]\Bigr] = \E_t\Bigl[-\E_{q(\rvtheta_t,\rvx)}[b] + \E_{p(\rvtheta_t,\rvx)}[g(b)]\Bigr],
\end{align}
where $g(b) := \exp(b-1)$ and the last line uses Eq.~\ref{eq:kliep-h-constraint} which is exactly the variational form of Theorem~\ref{thm:general_ratio}.

The derivative of the generator is $g'(b) = \exp(b-1)$. By Theorem~\ref{thm:general_ratio}, the optimal solution satisfies $g'(b^\star) = q/p$, i.e.,
\begin{equation}
b^\star(\rvtheta_t,\rvx,t) = 1 + \log\frac{q(\rvtheta_t,\rvx)}{p(\rvtheta_t,\rvx)}.
\end{equation}
Using the definition of $h$, the optimal discriminator satisfies
$a^\star(\rvtheta_t,\rvx,t) = \log\frac{q(\rvtheta_t,\rvx)}{p(\rvtheta_t,\rvx)} + \log Z,$
where $\log Z$ is constant with respect to $(\rvtheta_t,\rvx)$ (it may depend on $t$). Consequently,
$
\nabla_{\rvtheta_t} a^\star(\rvtheta_t,\rvx,t) = \nabla_{\rvtheta_t}\log\frac{q(\rvtheta_t,\rvx)}{p(\rvtheta_t,\rvx)},
$
showing that KLIEP recovers the exact log-density ratio up to an additive constant, and therefore yields the correct guidance term.

\subsection{Common Instantiations of DRE Framework}
\label{app:common_instances}
\begin{table}[H]
\centering
\caption{Instantiations of the generalized DRE framework. Various choices of the convex generator $g(v)$ induce distinct variational objectives, yet they consistently recover the same optimal density ratio $q/p$.}
\label{tab:f_divergence}
\begin{tabular}{lcc}
\toprule
Method & $g(v)$ & Optimal $h^\star$ \\
\midrule
NRE & $-\log(1-e^v)$ &  $\log \frac{q}{p+q}$ \\
Pearson $\chi^2$ & $\frac{1}{4}v^2+v$ & $2(\frac{q}{p}-1)$ \\
Reverse KL & $-1-\log(-v)$  & $-p/q$ \\
Hellinger & $\frac{v}{1-v}$ & $1-\sqrt{p/q}$ \\
\bottomrule
\end{tabular}
\end{table}

\section{Weighted Score Matching (WSM)}
\label{app:wsm}
WSM keeps the base score $s_{\mathrm{base}}$ frozen and learns an additive
correction network $u$ by minimizing the prior-ratio-weighted denoising
score matching loss
\begin{equation}
\mathcal{L}(u)=\mathbb{E}_{t,\,p(\rvtheta_t,\rvtheta_0,\rvx)}\!\left[
\frac{q(\rvtheta_0)}{p(\rvtheta_0)}
\bigl\| s_{\mathrm{base}}(\rvtheta_t,\rvx,t)+u(\rvtheta_t,\rvx,t)
-\nabla_{\rvtheta_t}\log p(\rvtheta_t\mid\rvtheta_0)\bigr\|^2\right].
\label{eq:wsm_loss}
\end{equation}
At sampling time, the adapted score $s_{\mathrm{base}}+u$ is used to generate new posterior samples. Since the expectation is taken under the
original training distribution, the loss is estimated on the existing
training set with weights $r(\rvtheta_0)=q(\rvtheta_0)/p(\rvtheta_0)$, so WSM
requires no new simulator calls. The reweighting principle follows
SNPE-B \citep{lueckmann2017flexible}, which importance-weights the NPE objective
to correct for proposal--prior mismatch. When $s_{\mathrm{base}}$ equals the base score $\nabla_{\rvtheta_t}\log p(\rvtheta_t\mid \rvx)$, the optimal additive correction network $u^*$ coincides with the guidance term $\nabla_{\rvtheta_t} \log \frac{q(\rvtheta_t,\rvx)}{p(\rvtheta_t,\rvx)}$.

\section{Algorithm: Sequential Posterior Update via DRE-based Diffusion Guidance (NRE Instantiation)}
\label{app:alg}
\begin{algorithm}[H]
\caption{Sequential Posterior Update via DRE-based Diffusion Guidance}
\label{alg:prior_guided}
\begin{algorithmic}[1]

\vspace{0.5em}
\STATE \textbf{Offline:}

Simulate
$\gD=\{(\rvtheta_0^{(k)},\rvx^{(k)})\}_{k=1}^{K}
\sim p(\rvtheta_0)p(\rvx\mid\rvtheta_0)$.

Generate original prior samples
$\Theta^{(0)}=\{{\rvtheta}_0^{(0,j)}\}_{j=1}^{M}\sim p(\rvtheta_0)$.

Using $\gD$, train a score-based diffusion model with $T$ diffusion steps with score $\mathbf{s}_\phi(\rvtheta_t, \rvx, t)\approx
\nabla_{\rvtheta_t}\log p(\rvtheta_t\mid \rvx)$.

\vspace{0.5em}
\STATE \textbf{Online:} 

Sample $\Theta^{(1)} = \{\rvtheta_0^{(1,j)}\}_{j=1}^M \sim q^{(1)}(\rvtheta_0) \approx p(\rvtheta_0 \mid \rvx^{(1)}_{\text{obs}})$.

Given $\rvx^{(1)}_{\text{obs}},\rvx^{(2)}_{\text{obs}},\ldots,\rvx^{(N)}_{\text{obs}}$. 

\vspace{0.5em}
\FOR{$n=2,3,\ldots,N$}
    \vspace{0.5em}

\STATE \textit{// estimate prior ratio}
    \STATE Estimate prior ratio
    $\hat{r}^{(n-1)}(\rvtheta_0)\approx \frac{q^{(n-1)}(\rvtheta_0)}{p(\rvtheta_0)}$
    using previous posterior samples
    $\Theta^{(n-1)}$
    and original prior samples $\Theta^{(0)}$.
    \vspace{0.5em}
    \STATE \textit{// train estimator for guidance term}
    \STATE Train log-ratio estimator $a_\psi^{(n)}(\rvtheta_t, \rvx, t)$ using the NRE objective
    \[
   \gL^{(n)}_{\mathrm{NRE}}
    =
    \E_t\left[-
    \E_{p(\rvtheta_t,\rvtheta_0,\rvx)}
    \!\left[
    \hat r^{(n-1)}(\rvtheta_0)
    \log S(a_\psi^{(n)})
    \right]
    -
    \E_{p(\rvtheta_t,\rvx)}
    \!\left[
    \log(1-S(a_\psi^{(n)}))
    \right]\right].
    \]
    \STATE \textit{// guided sampling}
    \STATE Initialize 
    $\Theta_T^{(n)} = \{\rvtheta_T^{(n,j)}\}_{j=1}^M\sim\gN(\mathbf{0},\rmI)$.
    \FOR{$t=T,\ldots,1$}
        \STATE $\tilde{\mathbf{s}}^{(n)} \leftarrow \mathbf{s}_\phi(\Theta^{(n)}_t,\rvx^{(n)}_{\text{obs}},t) + \nabla_{\rvtheta_t} a_\psi^{(n)}(\Theta^{(n)}_t,\rvx^{(n)}_{\text{obs}}, t)$
        \STATE $\Theta^{(n)}_{t-1} \leftarrow \textsc{SdeSolverStep}(\Theta^{(n)}_t,\tilde{\mathbf{s}}^{(n)},\Delta t)$ \COMMENT{reverse-time SDE step with step size $\Delta t$}
    \ENDFOR
    \vspace{0.6em}
    \STATE $\Theta^{(n)} = \{\rvtheta_0^{(n,j)}\}_{j=1}^M \sim q^{(n)}(\rvtheta_0) \approx p(\rvtheta_0\mid \rvx^{(1:n)}_{\text{obs}})$
\ENDFOR
\vspace{0.6em}
\STATE \textit{// obtain posterior samples }
\STATE \textbf{return} $\Theta^{(N)} = \{\rvtheta_0^{(N,j)}\}_{j=1}^M \sim p(\rvtheta_0\mid \rvx^{(1:N)}_{\text{obs}})$
\end{algorithmic}
\end{algorithm}

\section{Numerical Experimental Details}
\label{app:experiment_details}

\subsection{Simulators}
\label{app:simulators}

\textbf{Two Moons} \citep{lueckmann2021benchmarking} is a widely used benchmark in SBI, designed as a
two-dimensional task that presents a posterior distribution with both global (bimodality) and local
(crescent shape) structure. For a given parameter vector $\rvtheta = (\evtheta_1, \evtheta_2) \in \R^2$,
the simulator generates data $\rvx \in \R^2$ through a random angle $a$, a random radius $r$, and an intermediate crescent point $\rvc$:
\begin{align*}
    a &\sim \gU(-\pi/2,\, \pi/2), \\
    r &\sim \gN(0.1,\; 0.01^2), \\
    \rvc &= (r\cos(a) + 0.25,\; r\sin(a)), \\
    \rvx &= \rvc + \left(\frac{-|\evtheta_1 + \evtheta_2|}{\sqrt{2}},\; \frac{-\evtheta_1 + \evtheta_2}{\sqrt{2}}\right).
\end{align*}
To obtain ground-truth posterior samples, we perform rejection sampling using the target prior as a
proposal distribution, with the constant $M$ set to the upper bound of the likelihood
$p(\rvx\mid\rvtheta)$ achieved at $r = 0.1$.

\textbf{Ornstein-Uhlenbeck Process (OUP)} \citep{uhlenbeck1930theory} is a well-established stochastic
process frequently applied in financial mathematics and evolutionary biology for modeling mean-reverting
dynamics. For a given parameter vector $\rvtheta = (\evtheta_1, \evtheta_2) \in \R^2$, the model is defined as:
\begin{equation*}
     y_{k+1} = y_k + \evtheta_1\left[\exp(\evtheta_2) - y_k\right]\Delta t + 0.5\,w_k,
    \quad \text{for } k = 0, \ldots, K-1,
\end{equation*}
where we set $K = 25$, $\Delta t = 0.2$, $y_0 = 10$, and $w_k \sim \gN(0, \Delta t)$ independently across $k$.
Simulated data are normalized using z-scoring at training time. Although the OUP admits a tractable likelihood, we treat the simulator as a black box to remain consistent with the SBI setting, and use a trained score-based NPE model (trained on one million simulations) as a surrogate
ground-truth posterior.

\textbf{Turin} \citep{turin1972statistical} is a widely used time-series model for simulating radio
wave propagation. This model generates high-dimensional, complex-valued time-series data and is
governed by four key parameters: $g_0$ determines the reverberation gain, $T_{\text{rev}}$ controls the
reverberation time, $\lambda_0$ defines the arrival rate of the point process, and $\sigma^2$
represents the noise variance.

The model assumes a frequency bandwidth of $B = 0.5$ GHz and simulates the transfer function $H_k$ at $N_s = 101$ evenly spaced frequency points. The number of scattering paths is sampled as $N_{\text{paths}} \sim \mathrm{Poisson}(\lambda_0 \cdot \tau_{\max}) + 1$ with $\tau_{\max} = 10^{-7}$\,s. Time delays are drawn as $\tau_l \sim \gU(0, \tau_{\max})$, and the complex gains are
$\alpha_l \sim \gCN\!\left(0,\; g_0 \exp(-\tau_l / T_{\text{rev}}) / \lambda_0\right)$. The transfer function is:
\begin{equation*}
    H_k = \sum_{l=1}^{N_{\text{paths}}} \alpha_l \exp(-\mathrm{i}\, 2\pi\, \Delta f\, k\, \tau_l), \quad k = 0, \ldots, N_s - 1,
\end{equation*}
where $\mathrm{i} = \sqrt{-1}$ is the imaginary unit and $\Delta f = B / (N_s - 1)$. The observed transfer function at the $k$-th frequency point is $Y_k = H_k + W_k$ with $W_k \sim \gCN(0, \sigma^2)$. The time-domain signal is obtained via an inverse Fourier transform, $\tilde{\mathbf{y}} = \mathrm{IFFT}(\mathbf{Y})$ where $\mathbf{Y} = (Y_0, \ldots, Y_{N_s-1})^\top$, and the real-valued output is:
\begin{equation*}
    y_k = 10 \log_{10}(|\tilde{y}_k|^2 + \varepsilon), \quad k = 0, \ldots, N_s - 1,
\end{equation*}
where $\varepsilon = 10^{-10}$ is a small constant added for numerical stability. Parameters are sampled directly in the parameter space and passed to the simulator without further rescaling.
Simulated outputs are normalized via z-scoring at training time. As the Turin likelihood is implicit,
we use a score-based NPE model trained on one million simulations as a surrogate ground truth.

\textbf{Gaussian Linear} \citep{lueckmann2021benchmarking} is a standard SBI benchmark task used to
infer the mean of a multivariate Gaussian distribution when the covariance is fixed. In this model,
both the parameters $\rvtheta$ and the data $\rvx$ are 6-dimensional vectors.
The simulator is defined as:
\begin{equation*}
    \rvx \mid \rvtheta \sim \gN(\rvtheta,\, \boldsymbol{\Sigma}_s), \quad \boldsymbol{\Sigma}_s = 0.1\, \rmI_6.
\end{equation*}
In all experiments, the test-time priors are constructed as Gaussian or mixture-of-Gaussian distributions, so each new prior has a closed-form posterior, which we use as the
ground-truth posterior.

\textbf{Simple Likelihood Complex Posterior (SLCP)} \citep{papamakarios2019sequential} is a benchmark
task in SBI that features a tractable likelihood but a complex, multi-modal posterior. For a given
parameter vector $\rvtheta = (\evtheta_1, \evtheta_2, \evtheta_3, \evtheta_4, \evtheta_5) \in \R^5$,
the simulator generates data $\rvx \in \R^8$ according to the following process:
\begin{align*}
    \boldsymbol{\mu} &= (\evtheta_1,\, \evtheta_2)^\top, \\
    \rho &= \tanh(\evtheta_5), \\
    \boldsymbol{\Sigma} &= \begin{pmatrix} \evtheta_3^2 & \rho\, \evtheta_3 \evtheta_4 \\ \rho\, \evtheta_3 \evtheta_4 & \evtheta_4^2 \end{pmatrix}, \\
    \rvx_i \mid \rvtheta &\sim \gN(\boldsymbol{\mu},\, \boldsymbol{\Sigma}), \quad i = 1, \ldots, 4, \\
    \rvx &= \big(\rvx_1^\top,\, \rvx_2^\top,\, \rvx_3^\top,\, \rvx_4^\top\big)^\top.
\end{align*}
Because $\evtheta_3$ and $\evtheta_4$ enter $\boldsymbol{\Sigma}$ only as $\evtheta_3^2$, $\evtheta_4^2$, and
$\evtheta_3\evtheta_4$, the posterior admits four symmetric modes corresponding to sign flips of
$(\evtheta_3, \evtheta_4)$, yielding the complex posterior structure. To obtain ground-truth posterior samples, we run NUTS
targeting the posterior under the target prior. Since NUTS explores only one mode at a time, we run
the chain for $n/4$ samples and generate the remaining three symmetric modes by flipping the signs
of $\evtheta_3$ and $\evtheta_4$, then randomly subsample $n$ samples.

\subsection{Prior Specifications}
\label{app:prior}

We specify the training prior $p(\rvtheta)$ and the target prior $q(\rvtheta)$ for each simulator. Below, $\mathbf{1}_d$ denotes the all-ones vector, $\mathbf{1}_{d \times d}$ the all-ones matrix, $\mathbf{0}$ the zero vector, and $\rmI_d$ the $d \times d$ identity matrix. The mixture covariance $\boldsymbol{\Sigma}_{\text{mix}}$ used below is set to 
$\boldsymbol{\Sigma}_{\text{mix}} = 0.2\, \mathbf{1}_{6 \times 6} + 0.8\, \rmI_6$.

\begin{description}
    \item[Two Moons] ($\rvtheta \in \R^2$)\hfill \\
    \textbf{ID:} $\begin{aligned}[t] 
        p(\rvtheta) &= \gN\!\left(\mathbf{0},\, \begin{pmatrix} 1.5 & 0.6 \\ 0.6 & 1.5 \end{pmatrix}\right), \\
        q(\rvtheta) &= \gN\!\left((0.3, 0.5)^\top,\, \operatorname{diag}(0.4^2, 0.3^2)\right) 
    \end{aligned}$ \\[0.3em]
    \textbf{OOD:} $\begin{aligned}[t] 
        p(\rvtheta) &= \gN\!\left((-0.3, 0.2)^\top,\, \begin{pmatrix} 1.4 & 0.6 \\ 0.6 & 1.4 \end{pmatrix}\right), \\
        q(\rvtheta) &= \gN\!\left(\mathbf{0},\, \begin{pmatrix} 1.2 & -0.2 \\ -0.2 & 1.2 \end{pmatrix}\right)
    \end{aligned}$

    \item[Gaussian Linear] ($\rvtheta \in \R^6$)\hfill \\
    \textbf{ID:} $\begin{aligned}[t] 
        p(\rvtheta) &= \gN(\mathbf{0},\, 0.2\, \mathbf{1}_{6\times 6} + 0.8\, \rmI_6), \\
        q(\rvtheta) &= 0.5\, \gN(0.6\, \mathbf{1}_6,\, \boldsymbol{\Sigma}_{\text{mix}}) 
                       + 0.5\, \gN(-0.6\, \mathbf{1}_6,\, \boldsymbol{\Sigma}_{\text{mix}})
    \end{aligned}$ \\[0.3em]
    \textbf{OOD:} $\begin{aligned}[t] 
        p(\rvtheta) &= \gN(\mathbf{0},\, -0.1\, \mathbf{1}_{6\times 6} + 1.1\, \rmI_6), \\
        q(\rvtheta) &= \gN(1.1\, \mathbf{1}_6,\, 0.2\, \mathbf{1}_{6\times 6} + 0.8\, \rmI_6)
    \end{aligned}$

    \item[SLCP] ($\rvtheta \in \R^5$)\hfill \\
    \textbf{ID:} $\begin{aligned}[t] 
        p(\rvtheta) &= \gN(\mathbf{0},\, \operatorname{diag}(1^2, 2^2, 1^2, 1^2, 1^2)), \\
        q(\rvtheta) &= \gN((1, 0.5, 0, 0, 0)^\top,\, \operatorname{diag}(0.5^2, 1^2, 1^2, 0.5^2, 1^2))
    \end{aligned}$ \\[0.3em]
    \textbf{OOD:} $\begin{aligned}[t] 
        p(\rvtheta) &= \gN((0, 2, 0, 0, 0)^\top,\, \operatorname{diag}(0.5^2, 1^2, 0.6^2, 1^2, 0.8^2)), \\
        q(\rvtheta) &= \gN((1, 0.5, 0, 0, 0)^\top,\, \operatorname{diag}(0.5^2, 1^2, 1^2, 0.5^2, 1^2))
    \end{aligned}$

    \item[OUP] ($\rvtheta \in \R^2$)\hfill \\
    \textbf{ID:} $\begin{aligned}[t] 
        p(\rvtheta) &= \gN((2, 0)^\top,\, 0.5\, \rmI_2), \\
        q(\rvtheta) &= \gN((1.8, 0)^\top,\, 0.3\, \rmI_2)
    \end{aligned}$ \\[0.3em]
    \textbf{OOD:} $\begin{aligned}[t] 
        p(\rvtheta) &= \gN((1.5, 1.5)^\top,\, 0.5\, \rmI_2), \\
        q(\rvtheta) &= \gN((1.8, 0)^\top,\, 0.5\, \rmI_2)
    \end{aligned}$

    \item[Turin] ($\rvtheta \in \R^4$)\hfill \\
    \textbf{ID:} $\begin{aligned}[t] 
        p(\rvtheta) &= \gN((0.64, 0.55, 0.5, 0.6)^\top,\, \operatorname{diag}(0.05^2, 0.08^2, 0.07^2, 0.1^2)), \\
        q(\rvtheta) &= \gN((0.6, 0.5, 0.5, 0.5)^\top,\, \operatorname{diag}(0.02^2, 0.05^2, 0.05^2, 0.06^2))
    \end{aligned}$ \\[0.3em]
    \textbf{OOD:} $\begin{aligned}[t] 
        p(\rvtheta) &= \gN((0.7, 0.6, 0.65, 0.35)^\top,\, \operatorname{diag}(0.04^2, 0.07^2, 0.06^2, 0.06^2)), \\
        q(\rvtheta) &= \gN((0.6, 0.5, 0.5, 0.5)^\top,\, \operatorname{diag}(0.02^2, 0.05^2, 0.05^2, 0.06^2))
    \end{aligned}$
\end{description}

\subsection{Training Setup}
We employ a two-stage training procedure. First, the base Simformer model is trained on 50,000 simulator-generated samples. Second, for our proposed NRE-Based and KLIEP-Based Guidance, we train a lightweight log-ratio model using an augmented dataset of 470,000 pairs. This dataset comprises the original 50,000 samples supplemented by 420,000 posterior samples generated by the pretrained Simformer. The baseline PriorGuide requires no retraining of the base diffusion model, but fits a Gaussian mixture model to the prior ratio for each target prior.

\paragraph{Simformer}
We adopt a similar setup as the Simformer paper \citep{gloeckler2024all}, using the Variance Preserving Stochastic Differential Equation (VP-SDE) with a linear noise schedule parameterized by $\beta_{\min} = 0.1$ and $\beta_{\max} = 10.0$, running over the time interval $t \in [10^{-5}, 1]$. We use a transformer configuration with 6 layers, 4 attention heads (size 10), a token dimension of 40, and a 128-dimensional Gaussian Fourier embedding for diffusion time. MLP blocks use a widening factor of 3. In all experiments, the condition mask is sampled per batch using a structured random scheme, uniformly selecting one of the following: the joint mask, the posterior mask, the likelihood mask, or one of two independently sampled random masks. We use the same setup for all simulators. Models are trained with a batch size of 1,000 and an initial learning rate of $10^{-3}$, decayed linearly to $1 \times 10^{-6}$, combined with adaptive gradient clipping (maximum norm 10.0) and the Adam optimizer \citep{kingma2015adam}. The number of training iterations is constrained to a minimum of 5,000 and a maximum of 50,000.

\paragraph{DRE-Based Guidance (NRE and KLIEP)}
Both methods share the same lightweight log-ratio model architecture: a 4-layer MLP with hidden dimension 256, SiLU activations, and a 4-dimensional Gaussian Fourier embedding for diffusion time. The network $a_\psi(\evtheta_t, x, t)$ takes as input the noisy parameter state $\evtheta_t$, the observation $x$, and the diffusion time $t$, and outputs a scalar estimate of the diffusion-space joint log-ratio $\log q(\evtheta_t, x)/p(\evtheta_t, x)$.

The training dataset consists of the original 50,000 simulated samples from the reference prior, augmented with 420,000 additional $(\evtheta, x)$ pairs generated by the pretrained Simformer model --- for each of 420 distinct observations $x$ drawn from the base training set, 1,000 posterior samples $\evtheta \sim p(\evtheta \mid x)$ are obtained via the base diffusion model, yielding 470,000 training samples in total.

Both methods are trained using AdamW with a batch size of 4,096 and a learning rate of $3 \times 10^{-4}$ for 10,000 iterations.

\paragraph{PriorGuide}
We represent the prior ratio $\log q(\evtheta)/p(\evtheta)$ with a Gaussian mixture model (GMM) of $K=50$ diagonal-covariance components, fitted by minimizing the mean squared error between its log-density and the analytic log-ratio evaluated at samples from the target prior $q(\evtheta)$. The GMM is trained using Adam with a learning rate of $10^{-3}$, a batch size of 200, and gradient norm clipping (maximum norm 1.0), for 20,000 iterations.

\paragraph{WSM} 
WSM uses the same architecture, dataset, and optimizer settings as the DRE-based methods (4-layer MLP, hidden dimension 256, SiLU activations, 4-dimensional Gaussian Fourier time embedding; AdamW, batch size 4{,}096, learning rate $3\times10^{-4}$), and is trained for 10,000 iterations.

\subsection{Inference Setup}
All methods utilize the same reverse diffusion sampler: a discretization of the reverse VP-SDE with $T=25$ steps over $t \in [t_{\min}, t_{\max}] = [10^{-5}, 1]$. We employ a power-law time schedule $t_i = \bigl(t_{\min} + \tfrac{i}{T}(t_{\max}-t_{\min})\bigr)^{\rho}$ with $\rho=2.0$. At test time, we draw 5,000 posterior samples per observation for all evaluated methods. Evaluation uses 10 held-out observations with fixed random seeds, disjoint from the 420 observations used for data augmentation during log-ratio model training.

\paragraph{DRE-Based Guidance (NRE and KLIEP)}
At each reverse diffusion step, the base score is additively corrected by the gradient of the learned log-ratio model.
Notably, our methods require no additional correction steps, maintaining high sampling efficiency.

\paragraph{PriorGuide}
We apply $4$ Langevin correction steps after each reverse diffusion step, with a step size $\epsilon_t = \tfrac{1}{4} \beta(t) |\Delta t|$, where $\beta(t)$ is the drift coefficient of the VP-SDE at time $t$. 

\paragraph{WSM}
As with the DRE-based methods, the base score is additively corrected at each reverse diffusion step using the directly learned correction. No Langevin correction steps or additional gradient evaluations are required.

\subsection{Hardware}
\label{app:hardware-numerical}
All numerical experiments described in this appendix were conducted on a single NVIDIA GH200 GPU (120GB). Inference, which involves drawing 5,000 posterior samples, completes within seconds per observation.

\section{Additional Numerical Experimental Results}
\label{app:experiment_res}

\begin{table}[H]
\centering
\footnotesize
\setlength{\tabcolsep}{3pt}
\caption{\textbf{Posterior inference performance ($\evtheta$) under prior shift.} Mean (standard dev.) over 10 simulated observations, where each observation's value is averaged over 3 independent runs. Bold indicates the best result or one not significantly worse than the best (one-sided Wilcoxon signed-rank test paired by observation, $p > 0.05$). Gray rows are shown for reference only and are excluded from the comparison. All evaluation metrics are computed using $5{,}000$ posterior samples per method run: C2ST, MMD, and $W_2$ are calculated between these $5{,}000$ generated samples and $5{,}000$ reference (ground-truth) posterior samples, while RMSE is computed between the $5{,}000$ generated samples and the true parameters $\evtheta^*$.}
\label{tab:posterior}
\resizebox{\textwidth}{!}{%
\begin{tabular}{llcccccccc}
\toprule
 & & \multicolumn{4}{c}{ID} & \multicolumn{4}{c}{OOD} \\
\cmidrule(lr){3-6} \cmidrule(lr){7-10}
 & & C2ST & MMD & WS-2 & RMSE & C2ST & MMD & WS-2 & RMSE \\
\midrule
\multirow{6}{*}{Two Moons}
 & Simformer       & 0.79(0.02)                & 0.34(0.06)                & 0.68(0.43)                & 0.71(0.42)                & \textbf{0.69(0.06)}       & \textbf{0.01(0.01)}       & 0.25(0.24)                & \textbf{1.04(0.89)}       \\
 & PriorGuide      & \textbf{0.68(0.01)}       & \textbf{0.01(0.00)}       & \textbf{0.15(0.10)}       & \textbf{0.20(0.08)}       & 0.71(0.06)                & \textbf{0.00(0.01)}       & \textbf{0.14(0.12)}       & \textbf{1.03(0.86)}       \\
 & WSM             & 0.91(0.02)                & 0.28(0.06)                & 4.79(3.63)                & 5.01(3.70)                & 0.86(0.03)                & 0.03(0.03)                & 0.34(0.11)                & 1.10(0.78)                \\
 & Ours (NRE)    & \textbf{0.69(0.01)}       & 0.02(0.00)                & \textbf{0.16(0.08)}       & \textbf{0.21(0.06)}       & 0.72(0.06)                & \textbf{0.00(0.01)}       & \textbf{0.15(0.11)}       & \textbf{1.03(0.86)}       \\
 & Ours (KLIEP)    & 0.69(0.01)                & 0.01(0.00)                & \textbf{0.15(0.08)}       & \textbf{0.20(0.06)}       & 0.72(0.06)                & 0.01(0.01)                & 0.20(0.16)                & \textbf{1.03(0.86)}       \\
 & \textcolor{gray}{Simformer+SIR} & \textcolor{gray}{0.85(0.01)} & \textcolor{gray}{0.01(0.00)} & \textcolor{gray}{0.01(0.01)} & \textcolor{gray}{0.09(0.02)} & \textcolor{gray}{0.80(0.04)} & \textcolor{gray}{0.01(0.01)} & \textcolor{gray}{0.17(0.12)} & \textcolor{gray}{1.03(0.86)} \\
\midrule
\multirow{6}{*}{SLCP}
 & Simformer       & 0.83(0.04)                & 0.15(0.10)                & 0.33(0.10)                & 1.12(0.32)                & 0.87(0.05)                & 0.20(0.12)                & 0.45(0.20)                & 1.16(0.31)                \\
 & PriorGuide      & 0.82(0.05)                & 0.14(0.11)                & \textbf{0.28(0.08)}       & \textbf{1.05(0.32)}       & 0.80(0.06)                & \textbf{0.15(0.11)}       & \textbf{0.29(0.10)}       & \textbf{1.04(0.31)}       \\
 & WSM             & 0.83(0.04)                & 0.14(0.10)                & 0.33(0.09)                & 1.11(0.31)                & 0.81(0.05)                & 0.16(0.10)                & 4.65(13.27)               & 8.90(24.45)               \\
 & Ours (NRE)    & 0.81(0.05)                & \textbf{0.14(0.10)}       & \textbf{0.29(0.09)}       & 1.08(0.31)                & \textbf{0.78(0.04)}       & \textbf{0.13(0.09)}       & \textbf{0.30(0.11)}       & \textbf{1.08(0.30)}       \\
 & Ours (KLIEP)    & \textbf{0.81(0.05)}       & \textbf{0.14(0.10)}       & \textbf{0.29(0.09)}       & 1.08(0.31)                & 0.80(0.06)                & \textbf{0.15(0.10)}       & 0.34(0.14)                & 1.10(0.30)                \\
 & \textcolor{gray}{Simformer+SIR} & \textcolor{gray}{0.90(0.02)} & \textcolor{gray}{0.14(0.10)} & \textcolor{gray}{0.26(0.09)} & \textcolor{gray}{1.06(0.32)} & \textcolor{gray}{0.93(0.03)} & \textcolor{gray}{0.13(0.09)} & \textcolor{gray}{0.30(0.13)} & \textcolor{gray}{1.05(0.31)} \\
\midrule
\multirow{7}{*}{Gaussian Linear 6D}
 & Simformer       & 0.73(0.05)                & 0.09(0.04)                & 0.13(0.03)                & 0.43(0.07)                & 0.70(0.06)                & 0.08(0.03)                & 0.10(0.03)                & \textbf{0.34(0.06)}       \\
 & PriorGuide      & 0.55(0.01)                & 0.01(0.00)                & 0.04(0.01)                & \textbf{0.40(0.05)}       & 0.69(0.08)                & 0.06(0.05)                & 0.09(0.04)                & 0.37(0.07)                \\
 & WSM             & 0.60(0.04)                & 0.02(0.02)                & 0.06(0.03)                & 0.41(0.06)                & 0.67(0.06)                & 0.06(0.03)                & 0.11(0.05)                & 0.37(0.07)                \\
 & Ours (NRE)    & \textbf{0.54(0.01)}       & \textbf{0.00(0.00)}       & \textbf{0.03(0.00)}       & \textbf{0.39(0.04)}       & \textbf{0.58(0.05)}       & \textbf{0.02(0.01)}       & \textbf{0.04(0.02)}       & \textbf{0.34(0.05)}       \\
 & Ours (KLIEP)    & \textbf{0.54(0.01)}       & \textbf{0.00(0.00)}       & \textbf{0.03(0.00)}       & \textbf{0.39(0.04)}       & 0.60(0.05)                & 0.02(0.01)                & 0.05(0.02)                & \textbf{0.34(0.05)}       \\
 & \textcolor{gray}{Simformer+SIR} & \textcolor{gray}{0.85(0.04)} & \textcolor{gray}{0.00(0.00)} & \textcolor{gray}{0.02(0.00)} & \textcolor{gray}{0.38(0.04)} & \textcolor{gray}{0.83(0.05)} & \textcolor{gray}{0.01(0.01)} & \textcolor{gray}{0.03(0.01)} & \textcolor{gray}{0.33(0.05)} \\
 & \textcolor{gray}{Analytic} & \textcolor{gray}{0.51(0.00)} & \textcolor{gray}{0.00(0.00)} & \textcolor{gray}{0.02(0.00)} & \textcolor{gray}{0.39(0.04)} & \textcolor{gray}{0.55(0.03)} & \textcolor{gray}{0.01(0.01)} & \textcolor{gray}{0.03(0.01)} & \textcolor{gray}{0.34(0.04)} \\
\midrule
\multirow{6}{*}{OUP}
 & Simformer       & 0.52(0.02)                & 0.01(0.02)                & 0.03(0.03)                & \textbf{0.20(0.08)}       & 0.60(0.08)                & 0.10(0.12)                & 0.09(0.08)                & \textbf{0.25(0.13)}       \\
 & PriorGuide      & 0.53(0.03)                & 0.02(0.03)                & 0.04(0.03)                & \textbf{0.20(0.06)}       & 0.65(0.10)                & 0.18(0.23)                & 0.25(0.27)                & 0.42(0.22)                \\
 & WSM             & 0.55(0.06)                & 0.01(0.01)                & 0.04(0.03)                & 0.22(0.07)                & \textbf{0.56(0.06)}       & \textbf{0.04(0.04)}       & \textbf{0.30(0.74)}       & \textbf{0.54(0.96)}       \\
 & Ours (NRE)    & \textbf{0.51(0.01)}       & \textbf{0.00(0.00)}       & \textbf{0.01(0.01)}       & \textbf{0.20(0.06)}       & \textbf{0.54(0.05)}       & \textbf{0.02(0.04)}       & \textbf{0.05(0.05)}       & \textbf{0.24(0.09)}       \\
 & Ours (KLIEP)    & \textbf{0.51(0.01)}       & \textbf{0.00(0.00)}       & \textbf{0.01(0.01)}       & \textbf{0.20(0.06)}       & \textbf{0.54(0.04)}       & \textbf{0.02(0.03)}       & \textbf{0.05(0.04)}       & \textbf{0.24(0.09)}       \\
 & \textcolor{gray}{Simformer+SIR} & \textcolor{gray}{0.68(0.01)} & \textcolor{gray}{0.00(0.00)} & \textcolor{gray}{0.01(0.01)} & \textcolor{gray}{0.19(0.06)} & \textcolor{gray}{0.72(0.05)} & \textcolor{gray}{0.02(0.04)} & \textcolor{gray}{0.05(0.05)} & \textcolor{gray}{0.24(0.10)} \\
\midrule
\multirow{6}{*}{Turin}
 & Simformer       & 0.84(0.02)                & 0.18(0.04)                & 0.04(0.01)                & 0.09(0.01)                & 0.99(0.00)                & 0.82(0.02)                & 0.10(0.00)                & 0.13(0.03)                \\
 & PriorGuide      & \textbf{0.53(0.02)}       & \textbf{0.01(0.00)}       & \textbf{0.01(0.00)}       & \textbf{0.07(0.02)}       & \textbf{0.53(0.02)}       & \textbf{0.01(0.01)}       & \textbf{0.01(0.00)}       & \textbf{0.07(0.02)}       \\
 & WSM             & 0.61(0.03)                & 0.03(0.01)                & 0.01(0.00)                & 0.07(0.01)                & 0.82(0.04)                & 0.22(0.09)                & 0.05(0.01)                & 0.10(0.02)                \\
 & Ours (NRE)    & 0.57(0.03)                & 0.02(0.01)                & 0.01(0.00)                & \textbf{0.07(0.02)}       & 0.71(0.03)                & 0.08(0.03)                & 0.02(0.00)                & 0.08(0.02)                \\
 & Ours (KLIEP)    & 0.55(0.02)                & 0.01(0.01)                & 0.01(0.00)                & \textbf{0.07(0.02)}       & 0.73(0.04)                & 0.10(0.05)                & 0.03(0.01)                & 0.08(0.02)                \\
 & \textcolor{gray}{Simformer+SIR} & \textcolor{gray}{0.91(0.01)} & \textcolor{gray}{0.00(0.00)} & \textcolor{gray}{0.00(0.00)} & \textcolor{gray}{0.07(0.02)} & \textcolor{gray}{1.00(0.00)} & \textcolor{gray}{0.26(0.04)} & \textcolor{gray}{0.03(0.00)} & \textcolor{gray}{0.07(0.02)} \\
\bottomrule
\end{tabular}%
}
\end{table}

\vspace{-8pt}

\begin{table}[h]
\centering 
\footnotesize
\setlength{\tabcolsep}{3pt}
\caption{\textbf{Posterior predictive RMSE.} Mean (standard dev.) over 10 simulated observations, where each observation's value is averaged over 3 independent runs. Bold indicates the best result or one not significantly worse than the best (one-sided Wilcoxon signed-rank test paired by observation, $p > 0.05$). Gray rows are shown for reference only and are excluded from the comparison.}
\label{tab:posterior_predictive_results}
\begin{tabular}{llcc}
\toprule
 & & \multicolumn{2}{c}{RMSE} \\
\cmidrule(lr){3-4}
 & & ID & OOD \\
\midrule
\multirow{6}{*}{OUP}
 & Simformer       & \textbf{0.54(0.10)}       & \textbf{0.59(0.15)}       \\
 & PriorGuide      & 0.82(0.57)                & 1.18(1.06)                \\
 & WSM             & 0.55(0.10)                & \textbf{0.66(0.24)}       \\
 & Ours (NRE)    & \textbf{0.56(0.12)}       & \textbf{0.60(0.13)}       \\
 & Ours (KLIEP)    & 0.56(0.12)                & \textbf{0.61(0.15)}       \\
 & \textcolor{gray}{Simformer+SIR} & \textcolor{gray}{0.56(0.13)} & \textcolor{gray}{0.59(0.18)} \\
\midrule
\multirow{6}{*}{Turin}
 & Simformer       & 8.46(0.75)                & 8.58(0.83)                \\
 & PriorGuide      & 8.54(0.82)                & \textbf{8.52(0.84)}       \\
 & WSM             & \textbf{8.27(0.79)}       & 10.84(1.86)               \\
 & Ours (NRE)    & 8.57(0.76)                & \textbf{8.52(0.79)}       \\
 & Ours (KLIEP)    & 8.58(0.76)                & 8.63(0.81)                \\
 & \textcolor{gray}{Simformer+SIR} & \textcolor{gray}{8.46(0.79)} & \textcolor{gray}{8.42(0.73)} \\
\bottomrule
\end{tabular}
\end{table}

\begin{figure}[H]
    \centering
    \includegraphics[width=1\linewidth]{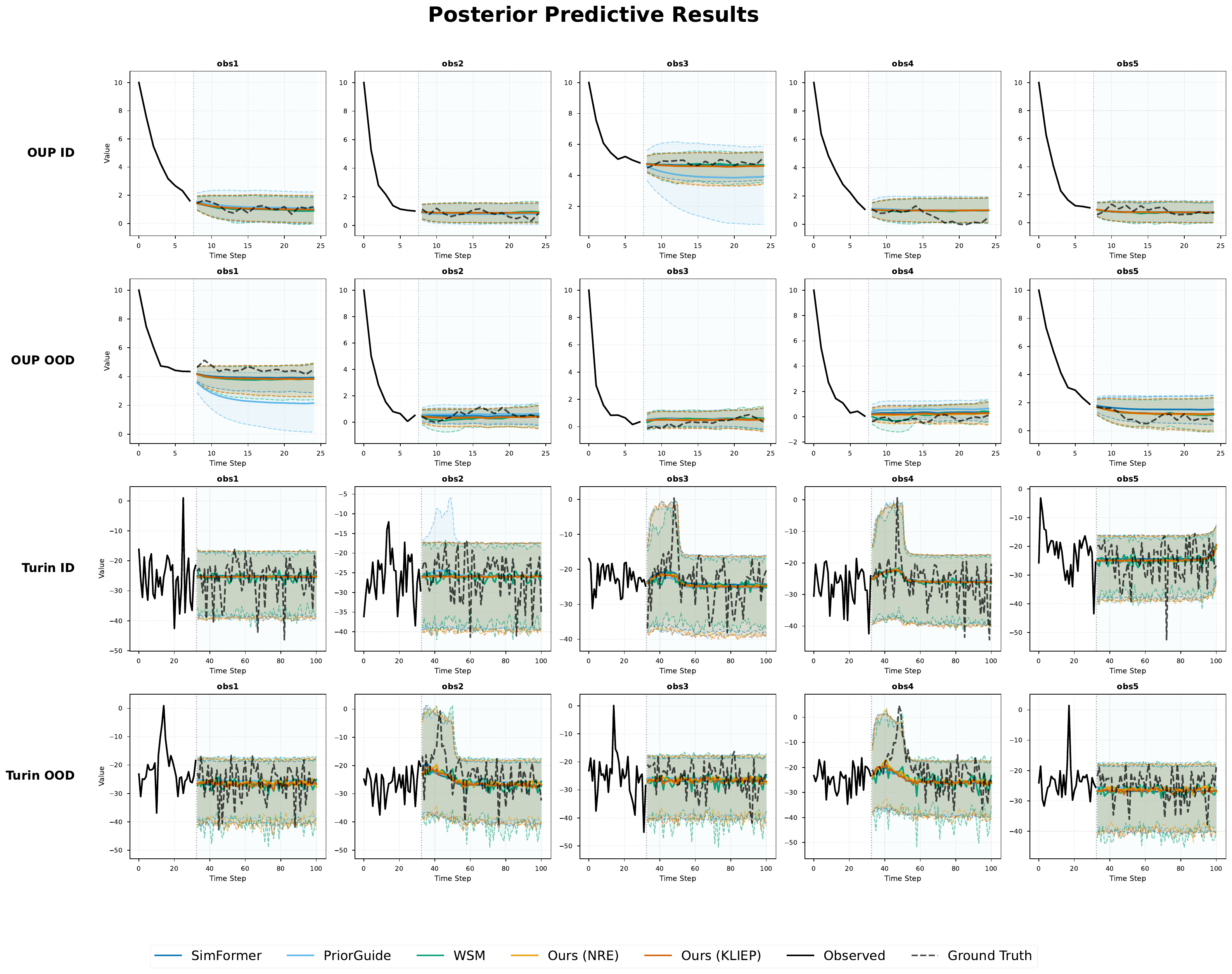}
    \caption{Posterior predictive results across all tasks and observation sets. Each row corresponds to a task setting (OUP ID, OUP OOD, Turin ID, Turin OOD), and each column shows one of the first five observations. Solid lines denote predictive means, while dashed lines and shaded regions indicate 95\% confidence intervals. Black curves represent observed data (solid) and ground truth (dashed). The vertical line marks the prediction boundary.}
    \label{fig:placeholder}
\end{figure}

\section{Failure Case Analysis}
\label{app:failure}
\subsection{Two Moons}
For each task, we draw $N$ samples $\rvtheta^{(i)} \sim p(\rvtheta \mid \mathbf{x})$
from the base model's posterior and compute the effective sample fraction of the
prior-ratio weights $w_i = q(\rvtheta^{(i)})/p(\rvtheta^{(i)})$, given by
\[
\mathrm{ESS}/N = \Big(\sum_{i=1}^{N} w_i\Big)^2 \Big/ \Big(N \sum_{i=1}^{N} w_i^2\Big) \in (0,1].
\] 
This quantity is close to $1$ when the ratio is nearly constant over the posterior support and close to $0$ when few samples dominate. Two Moons is the task with $\mathrm{ESS}/N$ closest to $1$, at $0.990$ (see Table~\ref{tab:essn}): the two priors still differ substantially over most of the parameter space, but the posterior concentrates in a region where their ratio is nearly flat. In this regime, the exact guidance term is close to zero, so the guidance methods mainly reflect their own approximation error rather than genuine guidance signal. Since $\mathrm{ESS}/N$ requires only base samples and the prior ratio, it can be computed before adaptation as an inexpensive diagnostic of whether the prior update carries enough signal to justify the cost of guidance.
\begin{table}[H]
\centering
\caption{Effective sample fraction $\mathrm{ESS}/N$ of the prior-ratio weights
$w_i = q(\rvtheta^{(i)})/p(\rvtheta^{(i)})$ over base-model posterior samples.}
\label{tab:essn}
\begin{tabular}{lcc}
\toprule
& \multicolumn{2}{c}{$\mathrm{ESS}/N$} \\
\cmidrule(lr){2-3}
Task & ID & OOD \\
\midrule
Two Moons          & 0.913 & 0.990 \\
OUP                & 0.957 & 0.487 \\
SLCP               & 0.854 & 0.127 \\
Gaussian Linear 6D & 0.574 & 0.209 \\
Turin              & 0.642 & 0.006 \\
\bottomrule
\end{tabular}
\end{table}

\subsection{Turin}
PriorGuide outperforms our method on Turin. The DRE component of our approach operates on the joint space of parameters and observations, and DRE is known to degrade in high dimensions \citep{choi2022density}. Turin has the largest combined dimension $d_\theta + d_x = 105$, in our numerical experiments, which may explain the weaker performance. A standard remedy in SBI is to compress $x$ with a learned embedding network~\citep{radev2020bayesflow}; we leave this to future work.

\section{Experimental Details for Planet Light Curve Inference}
\label{sec:app_lightcurve}

\subsection{Data Acquisition and Preprocessing}

We utilize real-world photometric data from the TESS mission, accessed via the
\texttt{Lightkurve} library \citep{2018ascl.soft12013L}.
We evaluate on three exoplanet systems spanning a range of transit depths and
orbital configurations: \textbf{WASP-126\,b}, \textbf{WASP-39\,b},
and \textbf{GJ~436\,b}.
For each target, individual sector light curves are downloaded, sigma-clipped,
normalized, and detrended before being stitched into a single time series.
Transit periods and reference epochs are estimated via a Box Least-Squares (BLS)
periodogram \citep{kovacs2002box}.

For each predicted transit time, we open a wide search window ($1.5\times$ the
nominal half-width) around the BLS epoch and locate the actual flux minimum using
Savitzky--Golay smoothing followed by a sub-cadence parabolic vertex fit.
The final window is re-cropped to $\pm 0.5$\,days relative to the detected dip
center, ensuring that the transit is consistently aligned to $t = 0$ across all
windows---matching the convention used during simulator training.
Each segment is then resampled onto a uniform grid of $N = 128$ points via linear
interpolation. Windows failing quality checks (mean flux deviating from unity by
more than 5\%, minimum flux below 0.5, or a linear trend exceeding
$2 \times 10^{-4}$\,per cadence) are discarded.

After preprocessing and quality filtering, the final datasets contain
$N_{\mathrm{obs}}=5$ transit windows for WASP-39\,b, and
$N_{\mathrm{obs}}=7$ windows for both GJ~436\,b and WASP-126\,b.
Each window corresponds to one observation $\rvx^{(i)} \in \R^{128}$ used in
the sequential posterior update procedure.

\subsection{Light Curve Observation Data}
\label{app:lc_obs}
\begin{figure}[H]
    \centering
    \includegraphics[width=1\linewidth]{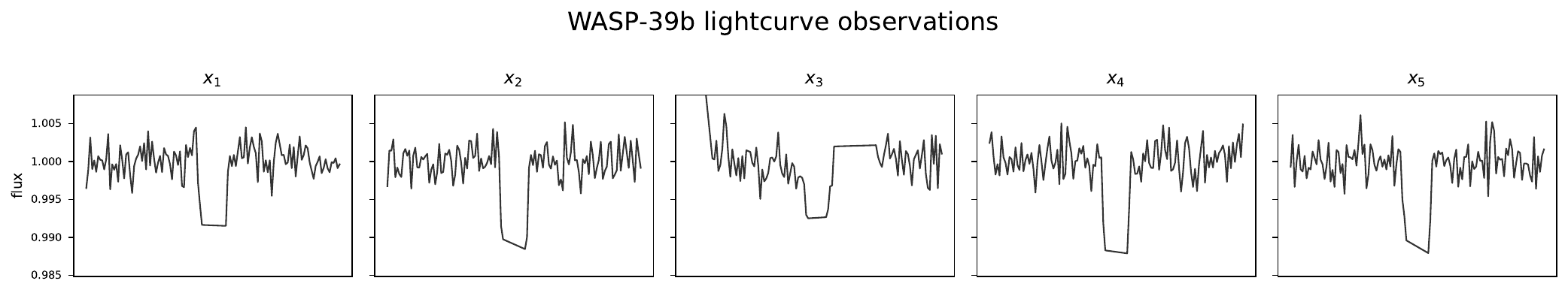}
    \caption{Example observation sequence for WASP-39\,b. Each window $\rvx^{(n)}_{\text{obs}}$ is a normalized light-curve segment
    centered on a transit event; the five segments $x_1,\dots,x_5$ are incorporated sequentially in the posterior updates shown in Figure~\ref{fig:lc_1}.}
    \label{fig:lc_obs}
\end{figure}

\subsection{Simulator and Forward Modeling}
Synthetic training samples are generated using \texttt{PyTransit} \citep{Parviainen2015}. The simulator implements a quadratic limb-darkening model to compute the relative flux. We assume a diagonal Gaussian reference prior $p(\evtheta) = \gN(\mu, \text{diag}(\sigma^2))$ over the 8-dimensional parameter space, as detailed in Table~\ref{tab:lc_priors}.

\begin{table}[H]
\centering
\caption{Training prior specifications for the light curve task. Parameters marked with $\star$ are the primary parameters of interest.}
\label{tab:lc_priors}
\begin{tabular}{ccllcc}
\toprule
Index &  & Parameter & Description & Mean ($\mu$) & Std ($\sigma$) \\
\midrule
0 & $\star$ & $k$ & Planet-to-star radius ratio & 0.10 & 0.04 \\
1 & $\star$ & $a$ & Scaled semi-major axis ($a/R_*$) & 10.0 & 3.0 \\
2 & $\star$ & $\mathrm{inc}$ & Orbital inclination (deg) & 87.5 & 2.0 \\
3 & $\star$ & $P$ & Orbital period (days) & 3.6 & 0.8 \\
4 &  & $t_0$ & Transit midpoint offset (days) & 0.0 & 0.05 \\
5 &  & $u_1$ & Limb-darkening coefficient 1 & 0.35 & 0.08 \\
6 &  & $u_2$ & Limb-darkening coefficient 2 & 0.18 & 0.06 \\
7 &  & $\sigma$ & Observational noise std & 0.001 & 0.0003 \\
\bottomrule
\end{tabular}
\end{table}

\subsection{Training and Guidance Protocol}
\paragraph{Base score model.}
The base score model is trained on $3 \times 10^5$ synthetic samples for 8{,}000 iterations using the Adam optimizer with a batch size of 2{,}048 and learning rate $10^{-3}$.
\paragraph{DRE for prior ratio.}
The prior ratio $r_0(\evtheta) \approx q(\evtheta)/p(\evtheta)$, used by both our method and the PriorGuide baseline, is estimated by a 3-layer MLP with 128 hidden units and SiLU activations, trained via a logistic DRE objective (AdamW, batch size 4{,}096, learning rate $10^{-3}$, 16{,}000 iterations).
\paragraph{Sequential adaptation (our method).}
At each transit window $n$, the prior-ratio DRE above is trained to obtain
\[
  \rho^{(n-1)}(\evtheta) \approx \log \frac{q^{(n-1)}(\evtheta)}{p(\evtheta)},
\]
using posterior samples of the $n-1$-th transit window ($q^{(n-1)}$) against the original synthetic prior $p(\evtheta)$. A DRE implemented as a 4-layer MLP with 256 hidden units and a 16-dimensional Fourier time embedding, is then trained for 26{,}000 iterations (batch size 4{,}096, learning rate $10^{-4}$) using NRE with prior ratio $r^{(n-1)}(\evtheta) = \exp(\rho^{(n-1)}(\evtheta))$.

\paragraph{Baseline: PriorGuide (Monte Carlo variant).}
Since the target prior at each step is available only as samples, we replace the GMM fit in the original PriorGuide with a Monte Carlo estimate of the guidance term. The prior-ratio DRE above provides $r_0(\evtheta) \approx  q(\evtheta)/p(\evtheta)$. At each reverse diffusion step $t$ ($T=25$ steps), $K=1{,}000$ samples are drawn from the Tweedie posterior $p(\evtheta_0 \mid \evtheta_t) \approx \gN(\hat\mu_0(t), \sigma_{\mathrm{post}}^2(t)\,I)$, with $\sigma_{\mathrm{post}}^2(t) = (\alpha_t^2/\sigma_t^2 + 1)^{-1}$, $\alpha_t = \sqrt{1-\sigma^2(t)}$, and $\sigma_t = \sigma(t)$. After each reverse diffusion step, $8$ Langevin correction steps are applied at the current noise level, with step size $\epsilon_t = \beta(t)\,|\Delta t| / 4$  where $\beta(t)$ is the drift coefficient of the VP-SDE at time $t$. 
\paragraph{Guided inference.}
For all methods, guided reverse-diffusion sampling uses $T = 25$ steps with a power-law time schedule ($\rho = 2.0$). We draw $10^5$ posterior samples per transit window. The transit midpoint offset $t_0$ is excluded from the sequential update, as each window is independently re-centered at the flux minimum.
\paragraph{Targets.}
We evaluate on three exoplanet systems observed by TESS: WASP-126\,b, WASP-39\,b, and GJ~436\,b, spanning a range of transit depths and orbital geometries.
\subsection{Hardware}
\label{app:hardware-lightcurve}
All experiments for the planet light-curve task were conducted on a single NVIDIA GH200 GPU (120GB), where posterior updates and guided sampling take approximately 10 minutes.

\section{Additional Planet Light Curve Inference Results}
\label{app:lc_res}

\begin{figure}[H]
    \centering
    \includegraphics[width=0.9\linewidth]{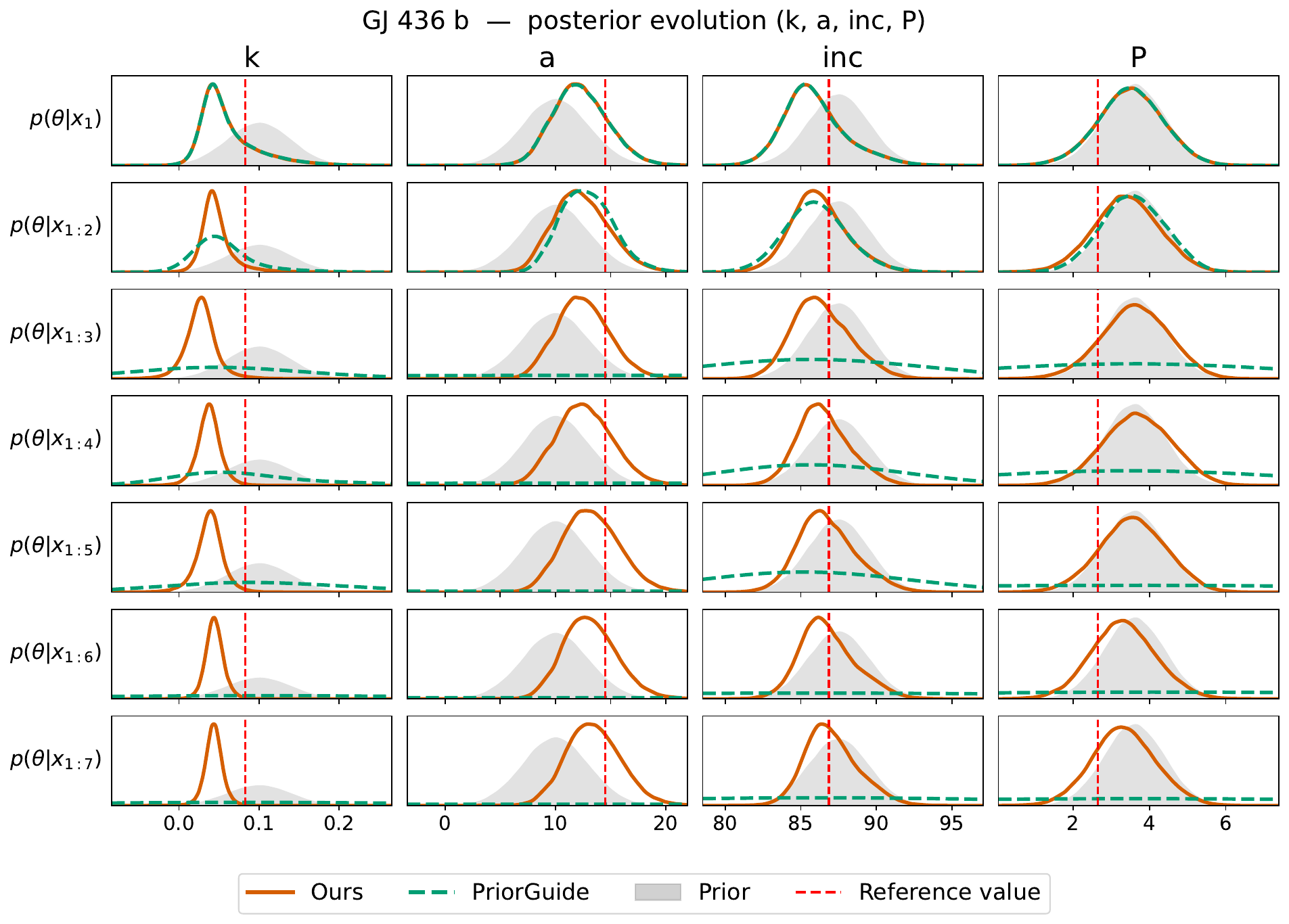}
    \caption{Sequential posterior evolution for the \textbf{GJ~436\,b} system (reference value from \citep{lanotte2014global}). Layout and notation follow Figure~\ref{fig:lc_1}.}
    \label{fig:lc_2}
\end{figure}

\begin{figure}[H]
    \centering
    \includegraphics[width=0.9\linewidth]{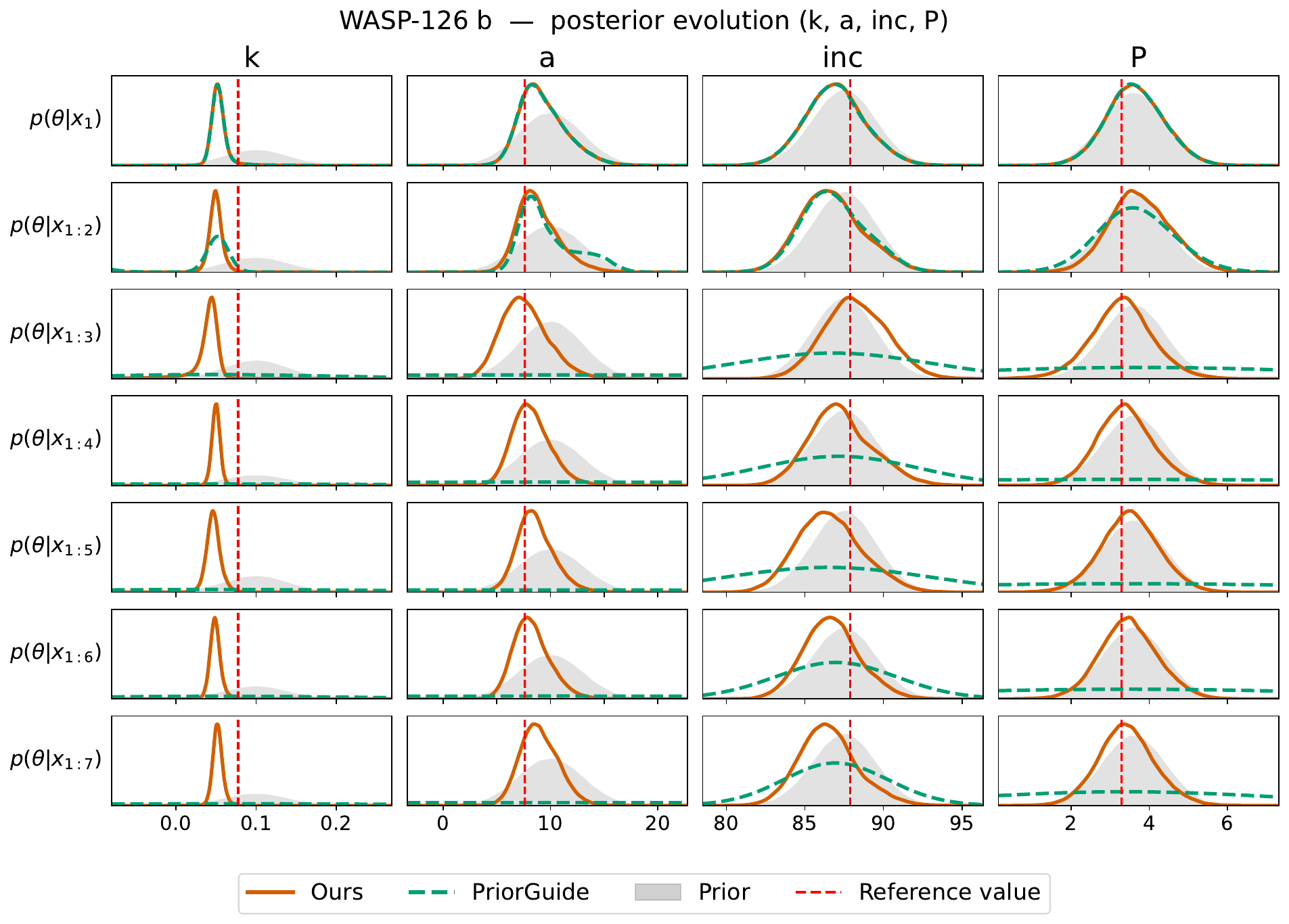}
    \caption{Sequential posterior evolution for the \textbf{WASP-126\,b} system (reference value from \citep{maxted2016five}). Layout and notation follow Figure~\ref{fig:lc_1}.}
    \label{fig:lc_3}
\end{figure}

\begin{figure}[H]
    \centering
    \includegraphics[width=1\linewidth]{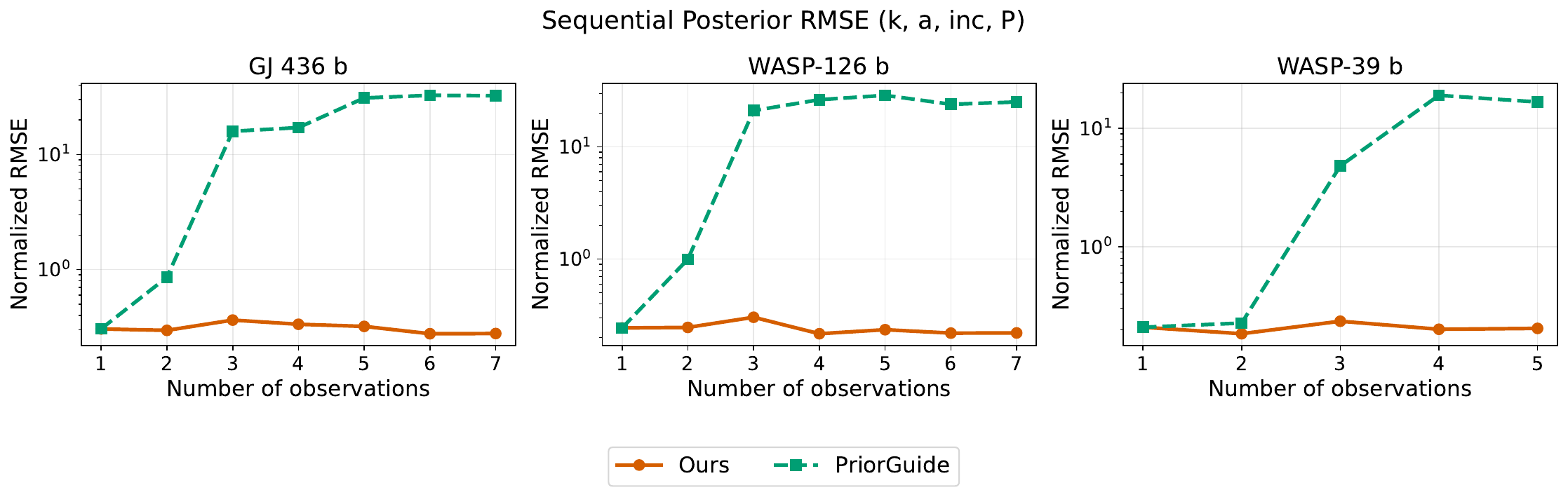}
    \caption{Normalized posterior RMSE across sequential observations for three exoplanet systems, computed using the transit parameters $k$, $a$, inc, and $P$ (normalized per parameter, then averaged). Our method maintains stable error across sequential updates, while PriorGuide exhibits progressively increasing error.}
    \label{fig:lc_rmse}
\end{figure}

\end{document}